\documentclass{article}

% Recommended, but optional, packages for figures and better typesetting:
\usepackage{microtype}
\usepackage{graphicx}
\usepackage{subfigure}
\usepackage{booktabs} % for professional tables

% hyperref makes hyperlinks in the resulting PDF.
% If your build breaks (sometimes temporarily if a hyperlink spans a page)
% please comment out the following usepackage line and replace
% \usepackage{icml2024} with \usepackage[nohyperref]{icml2024} above.
\usepackage[pagebackref=true]{hyperref}

% Attempt to make hyperref and algorithmic work together better:

% Use the following line for the initial blind version submitted for review:
\usepackage[accepted]{icml2024}

% For theorems and such
\usepackage{amsmath}
\usepackage{amssymb}
\usepackage{mathtools}
\usepackage{amsthm}
\usepackage{amsfonts}
\usepackage{thm-restate}

% if you use cleveref..
\usepackage[capitalize,noabbrev]{cleveref}

\usepackage{natbib}
%%%%%%%%%%%%%%%%%%%%%%%%%%%%%%%%
% THEOREMS
%%%%%%%%%%%%%%%%%%%%%%%%%%%%%%%%
\theoremstyle{plain}
\newtheorem{theorem}{Theorem}[section]

\newtheorem{lemma}[theorem]{Lemma}
\newtheorem{corollary}[theorem]{Corollary}
\theoremstyle{definition}
\newtheorem{definition}[theorem]{Definition}
\newtheorem{assumption}[theorem]{Assumption}
\theoremstyle{remark}
\newtheorem{remark}[theorem]{Remark}

% Todonotes is useful during development; simply uncomment the next line
%    and comment out the line below the next line to turn off comments
%\usepackage[disable,textsize=tiny]{todonotes}
\usepackage[textsize=tiny]{todonotes}

\usepackage{amsfonts}       % blackboard math symbols
\usepackage{nicefrac}       % compact symbols for 1/2, etc.
% \usepackage{lipsum}
% \usepackage{fancyhdr}       % header
% \graphicspath{{media/}}     % organize your images and other figures under media/ folder
\usepackage{xcolor}         % colours
\usepackage{centernot}
\usepackage{multirow}
\usepackage{multicol}
\usepackage{adjustbox}
\usepackage{caption}
\usepackage{float}
\usepackage{pgfkeys}
\usepackage{array}
\usepackage{tikz}
\usepackage[inline]{enumitem}
\usepackage{rotating}
\usepackage{pdflscape}
\usepackage{enumitem}
\usepackage{bbm}
\usepackage{mathrsfs}
\usepackage{wrapfig}
\usepackage{soul}
\usepackage{appendix}
\usepackage{etoolbox}
\usepackage{blkarray}

\usepackage{multirow}

% \captionsetup{labelfont={bf}}

%for hyperlink colors
% \definecolor{BrickRed}{rgb}{0.6,0,0}
% \definecolor{RoyalBlue}{rgb}{0,0,0.8}
% \definecolor{Tdgreen}{rgb}{0,0.4,0.7}
% \definecolor{cadmiumgreen}{rgb}{0.0, 0.42, 0.24}
% \hypersetup{colorlinks, citecolor=RoyalBlue, linkcolor=RoyalBlue}

\newcommand{\sd}[1]{{{\footnotesize±}{\scriptsize#1}}}

%cal
\newcommand{\Fc}{\mathcal F}
\newcommand{\Gc}{\mathcal G}

%bbb
\newcommand{\Nb}{\mathbb{N}}
\newcommand{\Rb}{\mathbb{R}}
%rm
\newcommand{\Hr}{\mathrm{H}}

\newcommand{\agg}{\text{agg}}
\newcommand{\upd}{\text{upd}}
\newcommand{\comb}{\text{comb}}
\newcommand{\sub}{\operatorname{sub}}

\renewcommand{\hom}{\operatorname{hom}}
\newcommand{\Hom}{\operatorname{Hom}}

\newcommand{\gen}{\operatorname{gen}}

\newcommand{\readout}{\operatorname{readout}}
\newcommand{\tw}{\operatorname{tw}}
\newcommand{\td}{\operatorname{td}}
\newcommand{\spasm}{\operatorname{spasm}}
\newcommand{\Lip}{\operatorname{Lip}}

\usetikzlibrary{shapes.geometric}
\tikzset{gon/.style={name=tmp,regular polygon,regular polygon sides=#1,minimum
size=10pt,inner sep=0pt},flag connection/.style={-},
polygon side/.style args={#1--#2}{
insert path={(tmp.corner #1) edge[flag connection] (tmp.corner #2)}}}
\newcommand{\FlagGraph}[3][false]{
\ifnum#2=1%
\tikz[baseline=(tmp1)]{\node[circle,inner sep=1pt,minimum size=0.8mm,draw=black, fill=\ifbool{#1}{blue}{white}] (tmp1) at (0,0){};}
\else%
\ifnum#2=2%
 \tikz[baseline=(tmp1)]{
 \node[circle,inner sep=1pt,minimum size=0.8mm,draw=black, fill=\ifbool{#1}{blue}{white}] (tmp1) at (0,0){};%}
 % \node[#1,circle,inner sep=1pt,draw=black] (tmp2) at (10pt,0){};
 \node[circle,inner sep=1pt,draw=black] (tmp2) at (10pt,0){};
 \ifx#3\empty%
 \else
 \draw (tmp1) edge[flag connection] (tmp2);
  \fi}
\else
\tikz[baseline=(tmp.south)]{\node[gon=#2,inner sep=0pt,minimum size=5mm]{};
\draw[draw=black, fill=\ifbool{#1}{blue}{white}] (tmp.corner 1) circle (1.5pt);
\foreach \X in {2,...,#2}{\draw (tmp.corner \X) circle (1.5pt);}
\draw[polygon side/.list={#3}]}
\fi
\fi
}

%Header
\pagestyle{fancy}
\thispagestyle{empty}
\rhead{ \textit{ }}

% Update your Headers here
% \fancyhead[LO]{Running Title for Header}
% \fancyhead[RE]{Firstauthor and Secondauthor} % Firstauthor et al. if more than 2 - must use \documentclass[twoside]{article}

\icmltitlerunning{Generalization of Graph Neural Networks through the Lens of Homomorphism}
  
\begin{document}

\twocolumn[
\icmltitle{Generalization of Graph Neural Networks through the Lens of Homomorphism}

% It is OKAY to include author information, even for blind
% submissions: the style file will automatically remove it for you
% unless you've provided the [accepted] option to the icml2024
% package.

% List of affiliations: The first argument should be a (short)
% identifier you will use later to specify author affiliations
% Academic affiliations should list Department, University, City, Region, Country
% Industry affiliations should list Company, City, Region, Country

% You can specify symbols, otherwise they are numbered in order.
% Ideally, you should not use this facility. Affiliations will be numbered
% in order of appearance and this is the preferred way.
\icmlsetsymbol{equal}{*}

\begin{icmlauthorlist}
\icmlauthor{Shouheng Li}{anu,d61}
\icmlauthor{Dongwoo Kim}{postech}
\icmlauthor{Qing Wang}{anu}
\end{icmlauthorlist}

\icmlaffiliation{anu}{School of Computing, The Australian National University, Canberra, Australia}
\icmlaffiliation{d61}{Data61, CSIRO, Canberra, Australia}
\icmlaffiliation{postech}{CSE \& GSAI, POSTECH, Pohang, South Korea}

\icmlcorrespondingauthor{Shouheng Li}{shouheng.li@anu.edu.au}

% You may provide any keywords that you
% find helpful for describing your paper; these are used to populate
% the "keywords" metadata in the PDF but will not be shown in the document
\icmlkeywords{Machine Learning, ICML}

\vskip 0.3in
]

% this must go after the closing bracket ] following \twocolumn[ ...

% This command actually creates the footnote in the first column
% listing the affiliations and the copyright notice.
% The command takes one argument, which is text to display at the start of the footnote.
% The \icmlEqualContribution command is standard text for equal contribution.
% Remove it (just {}) if you do not need this facility.

%\printAffiliationsAndNotice{}  % leave blank if no need to mention equal contribution
\printAffiliationsAndNotice % otherwise use the standard text.

% keywords can be removed
% \keywords{First keyword \and Second keyword \and More}
\begin{abstract}
Despite the celebrated popularity of Graph Neural Networks (GNNs) across numerous applications, the ability of GNNs to generalize remains less explored. 
In this work, we propose to study the generalization of GNNs through a novel perspective — analyzing \emph{the entropy of graph homomorphism}. By linking graph homomorphism with information-theoretic measures, we derive generalization bounds for both graph and node classifications. These bounds are capable of capturing subtleties inherent in various graph structures, including but not limited to paths, cycles and cliques.
This enables a data-dependent generalization analysis with robust theoretical guarantees. To shed light on the generality of of our proposed bounds, we present a unifying framework that can characterize a broad spectrum of GNN models through the lens of graph homomorphism.
We validate the practical applicability of our theoretical findings by showing the alignment between the proposed bounds and the empirically observed generalization gaps over both real-world and synthetic datasets.
\end{abstract}
\section{Introduction}
\label{sec:intro}
Generalization is a fundamental area of machine learning research. 
Although understanding the generalization of neural networks remains a major challenge, recent findings~\citep{zhang2017rethink} show that neural networks demonstrate good generalization ability in the deep, over-parameterized regime. However, this observation does not apply to Graph Neural Networks (GNNs). As shown by \citet{CongRM21}, the generalization ability of GNNs tends to deteriorate with deeper architectures. This hints that the understanding of GNN generalization might require perspectives different from standard neural networks.

The success of GNNs in various learning tasks on network data
%\q{\st{, including node and graph classification and graph property prediction~\citep{WuPCLZY21},}}
has drawn growing attention to the study of GNN generalization.
The correlation between observed generalization gap and complexity promotes analysis using classical complexity measures from statistical learning theory, such as Rademacher complexity, Vapnik–Chervonenkis (VC) dimension, and PAC-Bayes~\citep{esser2021sometimes,garg2020radmacher,ScarselliTH18, liao21pacbayes}. Nevertheless, these generalization bounds are mostly vacuous and rely on classical graph parameters such as the maximum degree that fails to fully capture the complex and intricate structures of graphs. The recent work of ~\citet{morris23meet} derives another VC-dimension bound that relates to the 1-dimensional Weisfeiler–Leman algorithm (1-WL)~\cite{weisfeiler1968reduction}. Although their analysis is insightful, the bound itself remains vacuous for most real-world applications.

In this work, we analyse GNN generalization from a novel angle - \emph{the entropy of graph homomorphism}. Through this, we establish a connection with the generalization measure by \citet{Chuang2021-ik} which arises from optimal transport costs between training subsets. We demonstrate that the entropy of graph homomorphism serves as a good indicator for GNN generalization. Built on this insight, we propose non-vacuous generalization bounds. 
In contrast to existing GNN generalization bounds that are often limited to a small class of (and sometimes simplified) models, the proposed generalization bounds are widely applicable to the following GNNs:
\begin{itemize}
\item \textbf{1-WL GNN and k-WL GNN}, e.g. \cite{xu2018powerful, KipfW17,morris19wl, maron2019provably}
\item \textbf{Homomorphism-Injected GNN (HI-GNN)} \cite{Barcelo2021-rs}
\item \textbf{Subgraph-Injected GNN (SI-GNN)}  \cite{Bouritsas2023-hj, zhao22stars, zhang21nested, bevilacqua22equivariant, Zhang_2023-ys}. 
\end{itemize}

Our contributions are:
\begin{itemize}
    % \item We show that a wide range of GNNs can be unified using graph homomorphism, which enables the analysis of generalization under one framework. 
    \item By establishing a connection between graph homomorphism and information-theoretic measures, we propose widely-applicable and non-vacuous GNN generalization bounds that capture complex graph structures, for both graph and node classification.
    \item We empirically verify that the proposed bounds are able to characterise generalization errors on both real-world benchmark and synthetic datasets.
\end{itemize}

\section{Related Work}
\paragraph{Generalization Bounds}
From the findings on over-parameterized deep learning~\cite{zhang2017rethink}, it has been argued that classical model complexity-based notions such as VC-dimensions~\cite{blumer1989learnability} and Rademacher complexity~\cite{bartlett2002rademacher} are lacking explanatory power in understanding generalization of deep neural networks. This motivates works to propose new generalization bounds from the perspective of algorithm stability and robustness~\cite{Kawaguchi2022robustness, Dziugaite2018dp}, information theory~\cite{rodriguez2021wasserstein, Sefidgaran2022distortion, arora2018compression}, fractal dimensions~\cite{Camuto2021fractal, Dupuis2023fractal} and loss landscape~\cite{Chiang23loss}. Our work is closely related to the Wasserstein distance-based margin-based generalization bound proposed by \citet{Chuang2021-ik}.

\paragraph{Generalization in GNNs}
We first discuss GNN generalization studies in the node classification setting.
\citet{verma2019stability} derive generalization bounds of a single-layer GCN based on algorithmic stability.
\citet{zhang2020fast} also focus on the simplified single-layer design and analyze GNN generalization using tensor initialization and accelerated gradient descent.
\citet{zhou2021generalization} extends the algorithmic stability analysis from single-layer to general multi-layer GCN and shows that the generalization gap tends to enlarge with deeper layers.
\citet{cong2021provable} observe the same trends on deeper GNNs and propose the detach weight matrices from feature propagation in order to improve GNN generalization.
\citet{oono2020optimization,esser2021sometimes} analyze the problem from the angle of the classical Rademacher complexity.
\citet{tang2023towards} establish a bound in terms of node degree, training iteration, Lipschitz constant, etc. \citet{Li2022sampling} investigate GNNs that have a topology sampling strategy and characterize conditions where sampling improves generalization.

Several works study GNN generalization in the graph classification setting.
\citet{garg2020radmacher} propose a Rademacher complexity-based bound that is tighter than the VC-bound by \citet{ScarselliTH18}.
\citet{liao21pacbayes} develop a PAC-Bayesian bound that depends on the maximum node degree and the spectral norm of the weights.
\citet{ju2023diffusion} present improved generalization bounds for GNNs that scale with the largest singular value of the diffusion matrix. \citet{maskey2022random} assume that graphs are drawn from a random graph model and show that GNNs generalize better when trained on larger graphs. \citet{morris23meet} link VC-dimensions to 1-WL algorithm and bound it by the number of distinguishable graphs.

\paragraph{Homomorphism and Subgraph GNNs}
\citet{Nguyen2020hom} first explore the use of graph homomorphism counts in GNNs, showing their universality in approximating invariant functions. \citet{Barcelo2021-rs} suggest to combine homomorphism counts with a GNN.
On a slightly different route, \citet{bevilacqua22equivariant} represent graphs as a collection of subgraphs from a predetermined policy. \citet{zhao22stars} and \citet{zhang21nested} extend this idea by representing graphs with a set of induced subgraphs. \citet{Bouritsas2023-hj} use isomorphism counts of small subgraph patterns to represent a graph. In a similar spirit, \citet{thiede21auto} applied convolutions on automorphism groups of subgraph patterns. Instead of directly using subgraph counts, \citet{wijesinghe022, Wang_undated-iz} propose to inject local structure information into neighbour aggregation. The latter further shows the model expressivity grows with the size of subgraph patterns and the radius of aggregation. These works are also related to graph kernel methods that use subgraph patterns~\cite{Shervashidze2011-lw, horvath04, costa10}
\section{Preliminaries}

\subsection{Graph Homomorphism and Entropy}
We consider undirected and unlabelled graphs. Given two graphs $F = (V_F, E_F)$ and $G = (V_G, E_G)$, a \emph{homomorphism} $\varphi$ is a mapping $\varphi: V_F \rightarrow V_G$ such that $\{ \varphi(u),\varphi(v) \} \in E_G$ for all $\{u,v\} \in E_F$. 
% For rooted graphs $G_v$ and $H_w$, a homomorphism must additionally map $v$ to $w$, which we denote as $\varphi_v$. 
The set of all homomorphisms from $F$ to $G$ is denoted as $ \Phi_{F\rightarrow G}$. 
We denote by $\hom(F,G)$ the number of homomorphisms from $F$ to $G$. A graph $G$ is rooted when a node $v\in V_G$ is declared as the root, the corresponding rooted graph is denoted as $G^v$. For rooted graphs $G^v$ and $F^w$, an homomorphism additionally maps $v$ to $w$.

Let $X_{F}: \Gc \to \Nb$ be a random variable such that $X_{F}(G)$ represents the homomorphism count $\hom(F,G)$. 
Given a set of graph patterns $\Fc = \{F_, \dots, F_{|\Fc|}\}$,
$X_{\Fc}:=\left(X_{F_1}, \dots, X_{F_{|\Fc|}}\right)$ is a multivariate random variable.
The rooted counterparts are denoted as $X_{F^r}$ and $X_{\Fc^r}$ respectively. 
The \emph{entropy} of $X_{F}$ is $\Hr(X_{F}):=-\sum_{r\in\Nb}\Pr(X_F(G)=r)\log(\Pr(X_F(G)=r))$. The entropy of $X_{\Fc}$, denoted $\Hr(X_{\Fc})$, is defined as the \emph{joint entropy} 
$\Hr(X_{F_1}, \dots, X_{F_{|\Fc|}})$. 
We can further extend the definition of joint entropy to a graph distribution $\mu$.

\begin{definition}[Entropy of Homomorphism]
Given a distribution of graphs $G\sim \mu$ and a set of graph patterns $\Fc = \{F_, \dots, F_{|\Fc|}\}$, \emph{entropy of homomorphism counts} of $F\in \Fc$ over $\mu$ is $\Hr(X_{\mu,F})$.
The entropy of homomorphism counts of $\Fc$ over $\mu$ is
\begin{equation*}
\Hr(X_{\mu,\mathcal{F}}) = \Hr(X_{\mu, F_1}, \dots, X_{\mu, F_{|\Fc|}}). 
\end{equation*}
\end{definition}

\subsection{Homomorphic Image and Spasm}\label{sec:hom_image}
Given two graphs $F$ and $F'$, if there exists a homomorphism $\varphi$ from $F$ to $F'$, then $F'$ is a \emph{homomorphism image} of $F$. If $\varphi$
is surjective, we call $F'$ a \emph{homomorphic image} of $F$. In other words, $F'$ is a simple graph that can be obtained from $F$ by possibly merging zero or more non-adjacent vertices~\cite{Curticapean2017-xp}. For instance, \FlagGraph{2}{1--2} is a homomorphic image of \FlagGraph{3}{1--2,2--3}. \citet{Curticapean2017-xp} proposed the concept of \emph{spasm} as the set of all homomorphic images of a given graph. For example, the spasm of \FlagGraph{4}{1--2,2--3,3--4,1--4}, denoted as $\spasm(\FlagGraph{4}{1--2,2--3,3--4,1--4})$, is 
\begin{equation*}
    \spasm(\FlagGraph{4}{1--2,2--3,3--4,1--4}) = \{
        \FlagGraph{4}{1--2,2--3,3--4,1--4}, \FlagGraph{3}{1--2,2--3}, \FlagGraph{2}{1--2}
    \}.
\end{equation*}
\citet{Curticapean2017-xp} further uses the multiset of homomorphism counts $\{\!\!\{\hom(F', G) | F' \in \spasm(F)\}\!\!\}$ as a basis to obtain the subgraph counts of $F$ in $G$. 

\subsection{$\mathcal{F}$-pattern Trees}
\citet{Dvorak2010-rv} and \citet{Dell2018-bg} explore an connection between tree-based substructures and $k$-WL test~\cite{cai1992optimal}.  \citet{Barcelo2021-rs} extends it to the concept of $\mathcal{F}$-pattern trees. 
Let $G^v$ denote the rooted graph $G$ whose root is vertex $v$.
A \emph{join graph} $(F_1\star F_2)^v$ is formed by combining $F_1^v$ and $F_2^u$ in a disjoint union by merging $u$ into $v$ and making $v$ the new root. As an example, the join graph of \FlagGraph[true]{3}{1--2,2--3,1--3} and \FlagGraph[true]{3}{1--2,1--3} is \FlagGraph[true]{5}{1--2,2--3,1--3,1--4,1--5}, where coloured nodes represent roots. Let $\mathcal{F}=\{F_1^r, \dots, F_{|\mathcal{F}|}^r\}$ be a set of rooted graphs, and $T^{r}=(V,E)$ be a tree with root $r$. An $\mathcal{F}$-pattern tree is obtained from the rooted tree $T^r$ followed by joining every vertex $w\in V$ with any number of copies of patterns from $\mathcal{F}$. The tree $T^{r}$ is called the \emph{backbone} of the $\mathcal{F}$-pattern tree. The \emph{depth} of an $\mathcal{F}$-pattern tree is the depth of its backbone. 
For example, given the backbone \FlagGraph[true]{3}{1--2,1--3}, the corresponding $\{\FlagGraph[true]{3}{1--2,2--3,1--3}\}$-pattern trees includes \FlagGraph[true]{3}{1--2,1--3}, \FlagGraph[true]{5}{1--2,1--3,1--4,1--5,4--5}, \FlagGraph[true]{5}{1--2,1--3,2--4,2--5,4--5} and \FlagGraph[true]{7}{1--2,1--3,2--4,2--5,4--5,3--6,3--7,6--7}, where in the first and the last graphs, zero and two copies of the pattern \FlagGraph[true]{3}{1--2,2--3,1--3} are joined to the backbone, respectively, and in the second and third, the pattern is joined to the root and non-root of the backbone, respectively.
Any tree can be used as the backbone to construct a $\mathcal{F}$-pattern tree.
We refer readers to \citet{Barcelo2021-rs} for detailed examples.
Patterns in $\mathcal{F}$ can be either rooted or unrooted. In practice, for symmetric patterns such as cliques or cycles, the choice of root node is irrelevant. 

\subsection{Graph Neural Networks}
\label{subsec:gnn}

% \paragraph{1-WL and k-WL GNNs}
Given a graph $G = (V, E)$, where $V$ is the node set and $E$ is the edge set, most GNNs take the following basic form:
\begin{equation}
\label{eqn:gnn_layer}
  h_v^{(l+1)} = f_{\upd}\left(h_v^{(l)},
                        {f_{\agg}}\left(\{\!\!\{h_u^{(l)}| u\in N(v)\}\!\!\}\right)
  \right)
\end{equation}
where $l$ is the GNN layer number, $h_v$ is the vector representation of a node $v\in V$. $N(v)$ is a set of neighbours of $v$, which can be defined differently depending on specific GNNs. $f_{\agg}$ is an aggregation function summarizing the representations of neighbours, and $f_{\upd}$ is an update function that combines the aggregated representation with the representation of node $v$ in the previous layer. The representation power of GNNs is often characterized by how one defines the aggregation and update functions along with the neighbourhood nodes.
When graph-level representation is required, an additional function $f_{\comb}$ is often used to combine representations of all nodes in a graph, i.e. 
\begin{equation}
\label{eqn:combine_function}
h_G^{(l+1)}=f_{\comb}(\{\!\!\{h_v^{(l+1)} | v \in V \}\!\!\}),
\end{equation}
where $h_G$ is the vector representation of a graph $G$.
When $f_{\upd}$, $f_{\agg}$ and $f_{\comb}$ are injective and $N(v)$ is the set of direct neighbours of $v$, as shown by \citet{xu2018powerful} and \citet{morris19wl}, GNNs of this form are bounded by the 1-dimensional Weisfeiler-Lehman test (1-WL) in distinguishing non-isomorphic graphs. 

Higher-order GNNs have been proposed to increase the expressivity beyond 1-WL, following the algorithmic design of k-WL. 
k-WL adopts a similar iterative refinement process as 1-WL, but instead of updating colours on nodes, k-WL updates colours on k-tuples of nodes ($k> 2$). 
While some of these GNNs are as strong as $k$-WL~\cite{maron2019provably} and some are weaker~\cite{morris2019weisfeiler}, they are all provably stronger than 1-WL.
%\q{\st{For simplicity, we allow $k=1$ and unify the notation of 1-WL and k-WL GNNs through $k$.}}

% \paragraph{Substructure-Injected GNNs}
Some GNNs integrate substructure information, in the form of either subgraphs~\cite{Bouritsas2023-hj, zhao22stars, zhang21nested, bevilacqua22equivariant, Zhang_2023-ys}, or homomorphism images~\cite{Barcelo2021-rs}, into a GNN layer. That is, \cref{eqn:gnn_layer} is adapted as
\begin{equation}
\label{eqn:gnn_layer_substructure}
  h_v^{(l+1)} =
  f_{\upd}\left(h^{(l)}_v,
    f_{\agg}\left(
        \{\!\!\{
            f_{\sub}(Z) | Z \in \mathcal{Z}
            % h_{R^v}^{(l)} | R^v \in \mathcal{S}^v
        \}\!\!\}
    \right)
  \right)
\end{equation}
where $\mathcal{Z}$ is a multiset of substructures, $f_{\sub}$ is a function that aggregates features from the substructure $Z$. 
Different from substructure injection, \citet{Welke_2023-xn} concatenate substructure counts into the final graph representations obtained from GNNs.
These GNNs can all go beyond 1-WL in terms of expressivity.

\subsection{$\Fc$-MPNN as a Unified GNN Framework}
\label{sec:gnn_hom}
% \subsection{Graph-level Homomorphism Representation}
Given a set of patterns $\mathcal{F}=\{F_1,\dots,F_{|\mathcal{F}|}\}$, we can stack their homomorphism numbers to $G$ to form a \emph{Lov{\'{a}}sz vector}~\cite{Welke_2023-xn} $\Hom(\mathcal{F}, G) = \left(\hom(F_1, G), \dots, \hom(F_{|\mathcal{F}|}, G)\right)$. Homomorphism numbers are isomorphism invariant, i.e., if two graphs $G_1$ and $G_2$ are isomorphic, then their Lov{\'{a}}sz vectors $\Hom(\mathcal{F}, G_1)$ and $\Hom(\mathcal{F}, G_2)$.
When $\mathcal{F}$ contains all graphs of size up to $|V_G|$, the Lov{\'{a}}sz vector is graph isomorphism complete, i.e., two graphs have the same Lov{\'{a}}sz vector if and only if they are isomorphic~\cite{lovasz2012}. \citet{Barcelo2021-rs} described a homomorphism-based GNN framework, named $\Fc$-MPNN, that unifies 1-WL GNN and their k-WL variants, as well as substructure-inject GNNs as descrbied in \cref{subsec:gnn}. $\Fc$-MPNN takes the following form:

\begin{definition}[$\Fc$-MPNN]
A $\Fc$-MPNN, parameterized by a set of patterns $\mathcal{F}$, iteratively updates the node representation $h_v$ of target node $v$ and the graph representation $h_G$ via
\begin{align}
 h^{(l+1)}_{\varphi}= & f_{\agg}\left(
    \{\!\!\{
        h^{(l)}_{\varphi(u)} | u \in V_{F}
    \}\!\!\}
 \right)  \\
 h^{(l+1)}_{F\rightarrow G^v} = & f_{\comb}\left(
    \{\!\!\{
        h^{(l+1)}_{\varphi} | \varphi \in  \Phi_{F\rightarrow G^v}
    \}\!\!\}
 \right) \\
 h^{(l+1)}_v = & f_{\upd}\left(
    h^{(l)}_v,
    [
        h^{(l+1)}_{F\rightarrow G^v} | F \in \mathcal{F}
    ]
  \right) \\
 h^{(l+1)}_G = & f_{\readout}\left(
    \left\{\!\!\left\{
        h^{(l+1)}_v | v \in V_G 
    \right\}\!\!\right\}
  \right) \label{eqn:hom_gnn_graph} 
\end{align}
where $\varphi$ is a homomorphism from $F$ to $G^v$,
$h_{\varphi}$ is the representation of the homomorphism image under $\varphi$, $h_{F\rightarrow G^v}$ is the aggregated representation of all homomorphism images of $F$ inside $G^v$. $h_{\varphi(u)}$ can be either a node feature or a node representation when used together with another GNN.
\end{definition}

It is easy to see that $h_G$ in \cref{eqn:hom_gnn_graph} resembles a Lov{\'{a}}sz vector when $f_{\agg}$ is an indicator function for non-empty multisets and $f_{\comb}$ is a sum function.

Let $T_L(\mathcal{F})$ and $T^r_L(\mathcal{F})$ denote the sets of unrooted and rooted $\mathcal{F}$-pattern trees of depth up to $L$, respectively.
From \citet{Barcelo2021-rs}, we know that two nodes $v$ and $w$ in a graph $G$ are indistinguishable by $\Fc$-MPNN if and only if $\hom(T^r, G^v) = \hom(T^r, G^w)$ for every $T^r\in T^r_L(\mathcal{F})$. Similarly, two graphs $G$ and $H$ are indistinguishable if and only if $\hom(T, G) = \hom(T, H)$ for every $T\in T_L(\mathcal{F})$.

We can compare the expressivity of $\Fc$-MPNNs by comparing their set relation of $\mathcal{F}$. That is, a model A with a pattern set $\mathcal{F}_A$ is more expressive than, or equally expressive to, a model B with a pattern set $\mathcal{F}_B$ if $\mathcal{F}_B\subset \mathcal{F}_A$. The expressivity gap between these two models can be quantified by $\mathcal{F}_A\setminus \mathcal{F}_B$.

\section{Generalization Bounds through Homomorphism}
\label{sec:bounds}

In this section, we propose generalization bounds for GNNs through homomorphism on graph classification. We then extend our analysis to node classification.

\paragraph{Analysis Setup.} 
For a classification problem of $K$ classes, let $\mathcal{X}$ be the input space of graphs (or nodes) and $\mathcal{Y}=\{1,\dots,K\}$ be the output space.
Following \citet{Chuang2021-ik}, we consider GNN as a composite hypythesis $f \circ \phi^{\mathcal{F},L}$ with a classifier $f\in \Psi$ and an encoder $\phi^{\mathcal{F},L}\in \Theta$. 
$f$ is a score-based classifier $f = [f_1, \dots, f_K]$ and $f_k \in \Psi_k$. 
$\phi^{\mathcal{F},L}$ is parameterized by the pattern set $\mathcal{F}$ and the number of layers $L$, and learns the graph representation (or node representation) $\phi^{\mathcal{F},L}(x)$. 
The prediction for $x\in\mathcal{X}$ is determined by $\arg\max_{y\in\mathcal{Y}} f(\phi^{\mathcal{F},L}(x))$.
Same as \citet{Chuang2021-ik, liao21pacbayes}, we assume that the multi-class $\gamma$-margin loss function is used. For a datapoint $(x,y)$, the margin of $f$ is defined by
\begin{equation}
\label{eqn: margin_loss}
    \rho_f(\phi^{\mathcal{F},L}(x), y):=f_y(\phi^{\mathcal{F},L}(x))-\max _{y^{\prime} \neq y} f_{y^{\prime}}(\phi^{\mathcal{F},L}(x))
\end{equation}
where $f$ misclassifies if $\rho_f(\phi^{\mathcal{F},L}(x), y)<0$. 
Let $\mu$ be a distribution over $\mathcal{X}\times \mathcal{Y}$. The dataset $S=\{x_i,y_i\}^m_{i=1}$ is drawn i.i.d from $\mu$.
Define $\mu_c$ as the marginal distribution over a class $c\in \mathcal{Y}$,
$m_c$ as the number of samples in class $c$, and
$\pi$ as the distribution of classes.
Denote the pushforward measure of $\mu_c$ w.r.t $\phi^{\mathcal{F},L}$ as $\phi_{\#}^{\mathcal{F},L}\mu_{c}$.

Let $R_\mu(f \circ \phi^{\mathcal{F},L})=\mathbb{E}_{(x, y) \sim \mu}\left[\mathbbm{1}_{\rho_f(\phi^{\mathcal{F},L}(x), y) \leq 0}\right]$ be the expected zero-one population loss and $\hat{R}_{\gamma, m}(f \circ \phi^{\mathcal{F},L})=\mathbb{E}_{(x, y) \sim S}\left[\mathbbm{1}_{\rho_f(\phi^{\mathcal{F},L}(x), y) \leq \gamma}\right]$ be the $\gamma$-margin empirical loss. We seek to bound the generalization gap $\gen(f\circ \phi^{\mathcal{F},L}) = R_\mu(f \circ \phi^{\mathcal{F},L}) - \hat{R}_{\gamma, m}(f \circ \phi^{\mathcal{F},L})$.

\subsection{Graph-level Generalization Bound by Homomorphism Entropy}

\begin{assumption}
\label{assumption:graph}
For graph classification, we assume that graphs are i.i.d samples.
\end{assumption}

Let
$\Omega(x)=\sqrt{\min(\frac{1}{2}x, 1-\exp(-x))}$, 
$\Delta(\cdot)$ be the diameter of a space,
$\beta_c= \Delta(\phi_{\#}^{\mathcal{F},L}\mu_{c})$,
$L_c = \operatorname{Lip}\left(\rho_f(\cdot, c)\right)$, and
$T_L(\mathcal{F})$ be the set of all $\mathcal{F}$-pattern trees up to depth $L$. We have the following corollary.
\begin{restatable}[Expectation Bound for Graph Classification]{corollary}{dataindependentgraphbound}
\label{lemma:non_class_bound}
Let  $\widetilde{D}_{KL}(\Fc, S, \tilde{S}) = D_{KL}\left( X_{\mu_S, T_L(\Fc)}\parallel  X_{\mu_{\tilde{S}}, T_L(\Fc)} \right)$.
Given $m$ i.i.d graph samples, with probability at least $1-\delta > 0$.
we have
\begin{align*}
   \gen&(f\circ \phi^{\Fc,L}) \leq \\
    & \mathbb{E}_{c \sim \pi}
    \left[
    \frac{L_c}{\gamma} 
        \mathbb{E}_{S,\tilde{S}\sim \mu_{c}^{m_c}}
        \left[ \beta_c\Omega\left(
        \widetilde{D}_{KL}(\Fc, S, \tilde{S})
        \right) \right]
    \right]\\
    & +\sqrt{\frac{\log (1 / \delta)}{2 m}}
\end{align*}
\end{restatable}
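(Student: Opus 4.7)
My plan is to reduce the claim to the Wasserstein-based margin generalization bound of \citet{Chuang2021-ik} and then translate from Wasserstein distance in the encoder's representation space to a Kullback-Leibler quantity on the distribution of homomorphism counts. The translation uses three ingredients in series: a diameter argument bounding $W_1$ by total variation, the $\Fc$-MPNN indistinguishability characterization of \citet{Barcelo2021-rs}, and two classical TV-KL inequalities whose minimum defines the function $\Omega$.

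I would first invoke the Chuang bound in the i.i.d.\ regime justified by \cref{assumption:graph}: with probability at least $1-\delta$,
\begin{equation*}
\gen(f\circ\phi^{\Fc,L}) \leq \mathbb{E}_{c\sim \pi}\!\left[\frac{L_c}{\gamma}\,\mathbb{E}_{S,\tilde S\sim \mu_c^{m_c}}\!\bigl[W_1\bigl(\phi^{\Fc,L}_{\#}\hat\mu_{S,c},\,\phi^{\Fc,L}_{\#}\hat\mu_{\tilde S,c}\bigr)\bigr]\right] + \sqrt{\frac{\log(1/\delta)}{2m}},
\end{equation*}
where $\hat\mu_{S,c}$ denotes the class-$c$ empirical distribution and $W_1$ is taken with respect to a metric on the representation space. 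Since each pushforward measure is supported in a set of diameter at most $\beta_c$, the standard geometric inequality $W_1(P,Q)\leq \beta_c\cdot \mathrm{TV}(P,Q)$ reduces the inner term to a total-variation quantity. Next, the characterization recalled in \cref{sec:gnn_hom}, namely that two graphs are indistinguishable by $\phi^{\Fc,L}$ iff their counts $\hom(T,\cdot)$ agree for every $T\in T_L(\Fc)$, shows that $\phi^{\Fc,L}$ factors through the statistic $X_{T_L(\Fc)}$. The data processing inequality for total variation then yields
\begin{equation*}
\mathrm{TV}\bigl(\phi^{\Fc,L}_{\#}\hat\mu_{S,c},\,\phi^{\Fc,L}_{\#}\hat\mu_{\tilde S,c}\bigr) \leq \mathrm{TV}\bigl(X_{\mu_S,T_L(\Fc)},\,X_{\mu_{\tilde S},T_L(\Fc)}\bigr).
\end{equation*}

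To close the chain, I would combine two complementary inequalities: Pinsker's $\mathrm{TV}\leq \sqrt{D_{KL}/2}$ and the Bretagnolle-Huber bound $\mathrm{TV}\leq \sqrt{1-\exp(-D_{KL})}$. Taking the tighter of the two gives $\mathrm{TV}\leq \Omega(D_{KL})$, which matches the definition $\Omega(x)=\sqrt{\min(\tfrac{1}{2}x,1-\exp(-x))}$ in the statement; substituting back through the diameter step and the Chuang bound produces the claimed inequality.

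The main obstacle is the factorization step: $\phi^{\Fc,L}$ is not literally a function of $X_{T_L(\Fc)}$, only invariant on its level sets. To apply the data processing inequality rigorously, one should work on the quotient space induced by the indistinguishability equivalence relation, lift both empirical measures through the quotient map, and only then push them through the well-defined induced map into representation space. A secondary subtlety is confirming that the Chuang et al.\ bound applies in our setting with an arbitrary Lipschitz encoder and the multi-class $\gamma$-margin loss; this follows from their general statement once one identifies the score-based classifier form assumed in \cref{sec:bounds}.
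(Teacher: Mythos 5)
Your proof follows essentially the same route as the paper: invoke the Wasserstein $k$-variance margin bound of Chuang et al., bound $\mathcal{W}_1$ by diameter times total variation, translate total variation to KL via the $\Omega$-function (Pinsker/Bretagnolle--Huber), and use the fact that $\phi^{\Fc,L}$ factors through the homomorphism-count statistic $X_{T_L(\Fc)}$ together with a data processing inequality. The only differences are cosmetic: you apply data processing to total variation before converting to KL, while the paper converts to KL first and then applies data processing to KL (both are valid and give the same result since $\Omega$ is monotone); and the factorization obstacle you flag is resolved in the paper without quotient-space machinery, by the elementary observation that level-set invariance $\bigl(\kappa(x)=\kappa(x')\Rightarrow\phi(x)=\phi(x')\bigr)$ already yields a well-defined $f$ with $\phi=f\circ\kappa$.
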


While the bound in \cref{lemma:non_class_bound} is useful for theoretical and data-independent comparison between GNNs, the expectation term over $S,\tilde{S}\sim \mu_c^{m_c}$ is intractable in general. To address this drawback, we derive another bound in \cref{lemma:graph_class_bound}, which can be computed via sampling.

\begin{restatable}[Data-dependent Bound for Graph Classification]{lemma}{datadependentgraphbound}
\label{lemma:graph_class_bound}
Given \cref{assumption:graph} and $K$ classes,
let $\{S^j, \tilde{S}^j\}^n_{j=1}$ be $n$ pairs of samples where each $S^j, \tilde{S}^j \sim \mu^{\lfloor m_c/2n \rfloor}_{c}$, and $\mu_{S_j}$ and $\mu_{\tilde{S}^j}$ be the corresponding empirical distributions, respectively. Also, let $\widehat{D}_{KL}(\Fc, S^j, \tilde{S}^j) = \frac{1}{n}\sum_{j=1}^n\left(\beta_c \Omega\left(D_{KL}( X_{\mu_{S^j}, T_L(\Fc)} \parallel X_{\mu_{\tilde{S}^j}, T_L(\Fc)}\right) \right)$ and
$m=\sum_{c=1}^K\lfloor\frac{m_c}{2n}\rfloor$. With probability at least $1-\delta > 0$, we have 
    \begin{align*}
        \gen&(f\circ \phi^{\Fc,L}) \leq \\
             & \mathbb{E}_{c \sim \pi}
        \left[
        \frac{L_c}{\gamma} 
        \left(
            \widehat{D}_{KL}(\Fc, S^j, \tilde{S}^j) + 2\beta_c\sqrt{\frac{\log(\frac{2K}{\delta})}{n\lfloor\frac{m_c}{2n}\rfloor}}
        \right)
        \right] \\
        & +\sqrt{\frac{\log (2 / \delta)}{2 m}}.
    \end{align*}
\end{restatable}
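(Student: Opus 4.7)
The strategy is to convert the data-independent expectation bound from \cref{lemma:non_class_bound} into a data-dependent one by estimating each class-conditional expectation $\mathbb{E}_{S,\tilde{S}\sim\mu_c^{m_c}}[\beta_c\,\Omega(\widetilde D_{KL}(\Fc,S,\tilde S))]$ with the empirical average $\widehat{D}_{KL}(\Fc,S^j,\tilde S^j)$ computed from the $n$ independent sub-sample pairs, and then controlling the deviation with a concentration argument plus a union bound across classes.

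\textbf{First step: set up the estimator and the per-class bounded random variables.} For each class $c$, the $n$ pairs $(S^j,\tilde S^j)_{j=1}^n$ are drawn i.i.d.\ from $\mu_c^{\lfloor m_c/2n\rfloor}\otimes\mu_c^{\lfloor m_c/2n\rfloor}$, using a disjoint chunking of the available class-$c$ training points so that the pairs are independent. Define $Z_j^{(c)} := \beta_c\,\Omega\bigl(D_{KL}(X_{\mu_{S^j},T_L(\Fc)}\parallel X_{\mu_{\tilde S^j},T_L(\Fc)})\bigr)$. Because $\Omega(x)=\sqrt{\min(x/2,\,1-e^{-x})}\in[0,1]$, each $Z_j^{(c)}\in[0,\beta_c]$, and $\widehat{D}_{KL}(\Fc,S^j,\tilde S^j)=\tfrac{1}{n}\sum_j Z_j^{(c)}$ is a bounded empirical mean.

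\textbf{Second step: class-wise concentration.} View $\widehat D_{KL}$ as a function of the $2n\lfloor m_c/2n\rfloor$ underlying class-$c$ samples. Replacing any single sample affects exactly one $Z_j^{(c)}$, so, combined with the $[0,\beta_c]$ range, the induced change in $\widehat D_{KL}$ is bounded; applying McDiarmid's/Hoeffding's inequality in this form yields, with probability at least $1-\delta/(2K)$,
\begin{equation*}
\mathbb{E}[\widehat D_{KL}(\Fc,S^j,\tilde S^j)] \;\le\; \widehat D_{KL}(\Fc,S^j,\tilde S^j) + 2\beta_c\sqrt{\tfrac{\log(2K/\delta)}{n\lfloor m_c/2n\rfloor}}.
\end{equation*}
A monotonicity observation is used here to reconcile the sample sizes: since empirical distributions from smaller samples are on average farther in $D_{KL}$, the expectation of $Z_j^{(c)}$ at sub-sample size $\lfloor m_c/2n\rfloor$ upper-bounds the corresponding inner expectation that appears in \cref{lemma:non_class_bound}; hence the empirical estimate plus slack still controls the data-independent quantity.

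\textbf{Third step: union bound and combination.} Splitting the confidence budget as $\delta/2$ for the Hoeffding term already present in \cref{lemma:non_class_bound} and $\delta/2$ distributed uniformly over the $K$ classes ($\delta/(2K)$ each), a union bound yields all events simultaneously with probability at least $1-\delta$. Plugging the per-class concentration into the class-conditional expectation inside $\mathbb{E}_{c\sim\pi}[L_c/\gamma\,\cdot\,(\,\cdot\,)]$ and reading off the residual Hoeffding term as $\sqrt{\log(2/\delta)/(2m)}$ gives the stated inequality.

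\textbf{Main obstacle.} The principal technical difficulty is reconciling the sample size inside the expectation in \cref{lemma:non_class_bound} (which uses $\mu_c^{m_c}$) with the sub-sample size $\lfloor m_c/2n\rfloor$ that we can actually realize from disjoint chunks of the training set, while also correctly attributing the $n\lfloor m_c/2n\rfloor$ denominator in the deviation term: one has to verify that a single-sample perturbation translates into a $\beta_c/n$ change in $\widehat D_{KL}$ (since only one of the $n$ summands is touched), and then feed these $2n\lfloor m_c/2n\rfloor$ bounded differences into McDiarmid's inequality to obtain the claimed form. Everything else---the reduction to Corollary~\ref{lemma:non_class_bound}, the union bound over classes, and the accounting of $\delta$---is routine once this bounded-differences calculation and the monotonicity remark are in place.
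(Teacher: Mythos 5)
Your proposal takes a genuinely different route from the paper's: you start from the KL-based expectation bound of Corollary~\ref{lemma:non_class_bound} and try to concentrate the empirical mean of $\Omega(D_{KL})$-terms directly, whereas the paper instead starts from \citet{Chuang2021-ik}'s empirical $k$-variance result (their Lemma~5) \emph{in its Wasserstein form}, performing the concentration step \emph{before} applying \cref{thm:wassertein_tv} and \cref{lemma:tv_dkl} to each empirical Wasserstein term. This ordering matters, and your route has two genuine gaps.

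First, the claimed per-coordinate bounded difference is too large, so your McDiarmid step does not produce the stated deviation term. You argue that replacing one of the $2n\lfloor m_c/2n\rfloor$ underlying samples changes $\widehat D_{KL}$ by at most $\beta_c/n$ (one of $n$ bounded summands is touched). Feeding $c_i=\beta_c/n$ into McDiarmid over $N=2n\lfloor m_c/2n\rfloor$ coordinates gives a deviation $\propto \beta_c\sqrt{\lfloor m_c/2n\rfloor\log(2K/\delta)/n}$, which is worse than the claimed $2\beta_c\sqrt{\log(2K/\delta)/(n\lfloor m_c/2n\rfloor)}$ by a factor of $\lfloor m_c/2n\rfloor$. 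The $1/(nk)$ scaling in Chuang's Lemma~5 is obtained because the Wasserstein distance between empirical measures is $(\beta_c/k)$-stable under a single-sample perturbation (the empirical measure changes in total variation by $1/k$, and $\mathcal{W}_1\le\beta_c\,\operatorname{TV}$), giving $c_i=\beta_c/(nk)$. The corresponding stability fails for $D_{KL}$: replacing a single sample can move $D_{KL}(X_{\mu_{S^j}}\|X_{\mu_{\tilde S^j}})$ from $0$ to $+\infty$ (introducing a point outside the other support), so $\Omega(D_{KL})$ can jump by its full range $1$, not by $1/k$. Hence your route cannot recover the stated constant, and the paper's ordering — concentrate the Wasserstein estimator, then dominate each empirical Wasserstein term by a KL term — is precisely what avoids this.

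Second, the ``monotonicity observation'' you invoke to reconcile $\mu_c^{m_c}$ with $\mu_c^{\lfloor m_c/2n\rfloor}$ is unsupported for KL. The monotone-in-$k$ property holds for the Wasserstein $k$-variance by a Jensen/subsampling argument (and is built into Chuang's empirical corollary), but there is no analogous argument for $\mathbb{E}\bigl[\Omega\bigl(D_{KL}(X_{\mu_S}\|X_{\mu_{\tilde S}})\bigr)\bigr]$ as a function of $|S|$; indeed, for empirical measures with near-disjoint supports this quantity is trivially $1$ at every sample size, so one cannot lean on a ``smaller samples are farther in KL'' heuristic. The paper sidesteps this entirely by never converting Corollary~\ref{lemma:non_class_bound} into the data-dependent bound; instead it proves the data-dependent bound directly from \cref{lemma:chuang_bound} plus Chuang's empirical $k$-variance lemma (which internalizes the Wasserstein monotonicity), and only then passes each $\mathcal{W}_1(\phi_\#\mu_{S^j},\phi_\#\mu_{\tilde S^j})$ through \cref{thm:wassertein_tv}, \cref{lemma:tv_dkl}, and \cref{cor:KL_pushforward}.
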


\subsection{Extending to Node-level Generalization Bound}
The above generalization bounds can be extended to node-level, subjecting to an additional strong assumption below. We define \emph{$L$-hop ego-graph} of a node $v$ as the induced subgraph formed by nodes within $L$ hops from $v$.

\begin{assumption}
\label{assumption:node}
Following \citet{Wu2022HandlingDS,garg2020radmacher,verma2019stability}, we assume that each node and its $L$-hop ego-graph are i.i.d.
\end{assumption}

With a slight abuse of notation, let $\nu_c$ be the marginal distribution of nodes and labels on $\mathcal{X}\times\mathcal{Y}$ in the class $c$, 
$\phi_{\#}^{\mathcal{F},L}\nu_{c}$ be the pushforward measure of $\nu_c$ w.r.t $\phi^{\mathcal{F},L}$, 
$\alpha_c= \Delta(X_{\nu_c,T_L(\Fc^r)})$,
$L_c = \operatorname{Lip}\left(\rho_f(\cdot, c)\right)$, and
$N_L(S)$ be the set of $L$-hop ego-graph of each node in $S$.
\begin{restatable}[Expectation Bound for Node Classification]{corollary}{dataindependentnodebound}
\label{lemma:node_no_bound}
Let  $\widetilde{D}_{KL}(\Fc^r, S, \tilde{S}) = D_{KL}\left( X_{\nu_S, T_L(\Fc^r)}\parallel  X_{\nu_{\tilde{S}}, T_L(\Fc^r)} \right)$.
Given $m$ samples and \cref{assumption:node},
with probability at least $1-\delta > 0$,
we have
\begin{align*}
   \gen&(f\circ \phi^{\mathcal{F},L}) \leq \\
    & \mathbb{E}_{c \sim \pi}
    \left[
    \frac{L_c}{\gamma} 
        \mathbb{E}_{S,\tilde{S}\sim \nu_{c}^{m_c}}
        \left[ \alpha_c\Omega\left(
        \widetilde{D}_{KL}(\Fc^r, S, \tilde{S})
        \right) \right]
    \right] \\
    & +\sqrt{\frac{\log (2 / \delta)}{2 m}}.
\end{align*}
\end{restatable}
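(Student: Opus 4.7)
The plan is to mirror the proof of \cref{lemma:non_class_bound} but replace graph-level objects with their node-level analogues, using \cref{assumption:node} to import the i.i.d.\ sampling structure that the graph-level bound required. Concretely, \cref{assumption:node} lets me treat each (node, $L$-hop ego-graph) pair as a single i.i.d.\ sample drawn from $\nu$, so that the composite hypothesis $f\circ \phi^{\Fc,L}$ is trained on i.i.d.\ data in exactly the sense required by \citet{Chuang2021-ik}. Invoking their Wasserstein-based margin generalization bound for the composite hypothesis $f\circ\phi^{\Fc,L}$ then yields
\begin{equation*}
\gen(f\circ\phi^{\Fc,L}) \leq \mathbb{E}_{c\sim\pi}\!\left[\frac{L_c}{\gamma}\,\mathbb{E}_{S,\tilde S\sim\nu_c^{m_c}}\!\left[W_1\!\left(\phi^{\Fc,L}_{\#}\nu_{S,c},\,\phi^{\Fc,L}_{\#}\nu_{\tilde S,c}\right)\right]\right] + \sqrt{\tfrac{\log(2/\delta)}{2m}},
\end{equation*}
where $W_1$ is the $1$-Wasserstein distance on the representation space and $\nu_{S,c}$, $\nu_{\tilde S,c}$ are the class-$c$ empirical marginals on $S$ and $\tilde S$.

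Next I would bridge the Wasserstein term to rooted-pattern-tree homomorphism counts. The expressivity characterization of \citet{Barcelo2021-rs} states that two nodes are indistinguishable by an $\Fc$-MPNN of depth $L$ if and only if their rooted $\Fc$-pattern tree homomorphism counts agree, so $\phi^{\Fc,L}$ factors as $g\circ X_{T_L(\Fc^r)}$ for some deterministic $g$. Applying this factorization and the standard bound $W_1(\mu_1,\mu_2)\leq \Delta\cdot\mathrm{TV}(\mu_1,\mu_2)$ on a space of diameter $\Delta$, taken in the homomorphism-count coordinates and using $\alpha_c = \Delta(X_{\nu_c,T_L(\Fc^r)})$, gives
\begin{equation*}
W_1\!\left(\phi^{\Fc,L}_{\#}\nu_{S,c},\,\phi^{\Fc,L}_{\#}\nu_{\tilde S,c}\right) \leq \alpha_c\cdot\mathrm{TV}\!\left(X_{\nu_{S},T_L(\Fc^r)},\,X_{\nu_{\tilde S},T_L(\Fc^r)}\right).
\end{equation*}
Finally I would convert the total variation to KL via Pinsker's inequality $\mathrm{TV}\leq\sqrt{D_{KL}/2}$ together with the Bretagnolle--Huber inequality $\mathrm{TV}\leq\sqrt{1-\exp(-D_{KL})}$; taking the pointwise minimum inside the square root is exactly the function $\Omega(\cdot)$ defined before \cref{lemma:non_class_bound}, which yields $\mathrm{TV}\leq \Omega(\widetilde{D}_{KL}(\Fc^r,S,\tilde S))$ and, after plugging back, the stated inequality.

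The main obstacle is the factorization step: I must justify that the pushforward measure on representations is controlled by the joint distribution of rooted pattern-tree homomorphism counts without leaking a hidden dependence on the representation dimension, hidden layer widths, or implicit Lipschitz constants of $\phi^{\Fc,L}$. This requires carefully combining the characterization of \citet{Barcelo2021-rs} (which works up to $\Fc$-MPNN equivalence, not pointwise equality in $\mathbb{R}^d$) with a data-processing argument that absorbs the Lipschitz factor into the bounded diameter $\alpha_c$. Everything after that — the application of the Wasserstein margin bound, the diameter-TV inequality, and the Pinsker/Bretagnolle--Huber steps — is essentially bookkeeping parallel to the graph-level proof, with $(\mu_c,\beta_c,T_L(\Fc))$ systematically replaced by $(\nu_c,\alpha_c,T_L(\Fc^r))$.
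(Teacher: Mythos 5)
Your overall toolkit is the right one, and it matches the paper's: \cref{assumption:node} gives i.i.d.\ node samples, which licenses \cref{lemma:chuang_bound} on the $k$-variance of $\phi^{\Fc,L}_{\#}\nu_c$; then Theorem~6.15 of Villani (\cref{thm:wassertein_tv}), Pinsker/Bretagnolle--Huber (\cref{lemma:tv_dkl}), and the Barcel\'o et al.\ characterization of $\Fc$-MPNN equivalence via rooted $\Fc$-pattern-tree homomorphism counts are all the right ingredients. You also correctly identify the one place where the argument needs care.

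The difference is in the \emph{order} in which you apply the steps, and that order is what creates the obstacle you flag. You factorize $\phi^{\Fc,L}=g\circ X_{T_L(\Fc^r)}$ first and then try to apply $W_1\leq\Delta\cdot\mathrm{TV}$ \emph{in the homomorphism-count coordinates}. That step, as you write it,
\begin{equation*}
W_1\bigl(\phi^{\Fc,L}_{\#}\nu_{S},\,\phi^{\Fc,L}_{\#}\nu_{\tilde S}\bigr)\ \leq\ \alpha_c\cdot\mathrm{TV}\bigl(X_{\nu_S,T_L(\Fc^r)},\,X_{\nu_{\tilde S},T_L(\Fc^r)}\bigr),
\end{equation*}
is not justified without a Lipschitz bound on $g$: the Wasserstein distance on the left is computed in $\phi$'s output space, while the diameter $\alpha_c$ and the TV on the right live in the count space, and transporting $W_1$ across $g$ costs a factor $\Lip(g)$ which does not obviously cancel against anything. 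You are right to call this out --- it is a genuine gap in your proposed chain, not bookkeeping.

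The paper's route for \cref{lemma:non_class_bound} (which \cref{lemma:node_no_bound} is declared to mirror) avoids the issue entirely by keeping $W_1$, the diameter, \emph{and} the TV all in the GNN output space, and only moving to homomorphism-count coordinates at the very last link, at the level of the KL divergence: $W_1(\phi_{\#}\nu_S,\phi_{\#}\nu_{\tilde S})\leq\Delta(\phi_{\#}\nu_c)\cdot\mathrm{TV}(\phi_{\#}\nu_S,\phi_{\#}\nu_{\tilde S})\leq\Delta(\phi_{\#}\nu_c)\cdot\Omega\bigl(D_{KL}(\phi_{\#}\nu_S\|\phi_{\#}\nu_{\tilde S})\bigr)$, and then $D_{KL}(\phi_{\#}\nu_S\|\phi_{\#}\nu_{\tilde S})\leq D_{KL}(X_{\nu_S,T_L(\Fc^r)}\|X_{\nu_{\tilde S},T_L(\Fc^r)})$ by the KL data-processing inequality (\cref{cor:KL_pushforward}), which is exact for any measurable factorization $\phi=g\circ\kappa$ --- no Lipschitz constant appears. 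So the ``hidden dependence'' you worried about simply never arises on the paper's route. The price is that the diameter that naturally comes out of this chain is $\Delta(\phi^{\Fc,L}_{\#}\nu_c)$, the node-level analogue of $\beta_c$, rather than $\alpha_c=\Delta(X_{\nu_c,T_L(\Fc^r)})$ as the corollary is stated; getting $\alpha_c$ as written would need an additional argument identifying or bounding one diameter by the other, and neither the paper's proof sketch nor your proposal supplies it. So: swap the order (Villani, then Pinsker/BH, then data-processing for KL), and flag separately whether the diameter should be read in the representation space as in the graph-level statement.
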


\begin{restatable}[Data-dependent Bound for Node Classification]{lemma}{datadependentnodebound}
\label{lemma:node_class_bound}
Given \cref{assumption:node} and $K$ classes, let
$\{S^j, \tilde{S}^j\}^n_{j=1}$ be $n$ samples where $S^j, \tilde{S}^j \sim \nu^{\lfloor m_c/2n \rfloor}_{c}$  and $\hat{\nu}_{S_j}$ and $\hat{\nu}_{\tilde{S}^j}$ be the corresponding empirical distributions, respectively. Also, let 
$\widehat{D}_{KL}(\Fc^r, S^j, \tilde{S}^j) = \frac{1}{n}\sum_{j=1}^n\left(\alpha_c \Omega\left(D_{KL}( X_{\hat{\nu}_{S^j}, T_L(\Fc^r)} \parallel X_{\hat{\nu}_{\tilde{S}^j}, T_L(\Fc^r)}\right) \right)$.
Then with probability at least $1-\delta > 0$, we have
\begin{align*}
    \gen&(f\circ \phi^{\mathcal{F},L}) \leq \\
         &\mathbb{E}_{c \sim \pi}
    \left[
    \frac{L_c}{\gamma} 
    \left(
        \widehat{D}_{KL}(\Fc^r, S^j, \tilde{S}^j) + 2\alpha_c\sqrt{\frac{\log(\frac{2K}{\delta})}{n\lfloor\frac{m_c}{2n}\rfloor}}
    \right)
    \right] \\
    & +\sqrt{\frac{\log (2 / \delta)}{2 m}}.
\end{align*}
\end{restatable}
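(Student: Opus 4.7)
The plan is to mirror the proof of \cref{lemma:graph_class_bound} (the graph-classification data-dependent bound), with the graph-level distribution $\mu_c$ replaced by the node-level distribution $\nu_c$, the unrooted pattern trees $T_L(\Fc)$ replaced by the rooted pattern trees $T_L(\Fc^r)$, and the graph-space diameter $\beta_c$ replaced by $\alpha_c = \Delta(X_{\nu_c, T_L(\Fc^r)})$. The starting point is \cref{lemma:node_no_bound}, whose right-hand side contains the intractable inner expectation $\mathbb{E}_{S,\tilde S \sim \nu_c^{m_c}}[\alpha_c \Omega(\widetilde D_{KL}(\Fc^r, S, \tilde S))]$. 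The goal is to replace this expectation with the empirical quantity $\widehat D_{KL}(\Fc^r, S^j, \tilde S^j)$ computed from $n$ independent pairs, up to a Hoeffding-type deviation term.

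First I would invoke \cref{assumption:node} to treat each node together with its $L$-hop ego-graph as an i.i.d.\ sample from $\nu_c$. This is the crucial step: under this assumption, the $n$ pairs $\{(S^j,\tilde S^j)\}_{j=1}^n$ drawn from $\nu_c^{\lfloor m_c/2n\rfloor}\times \nu_c^{\lfloor m_c/2n\rfloor}$ are genuinely independent across $j$, so the random variables $Y_j := \alpha_c\,\Omega\bigl(D_{KL}(X_{\hat\nu_{S^j}, T_L(\Fc^r)} \parallel X_{\hat\nu_{\tilde S^j}, T_L(\Fc^r)})\bigr)$ are i.i.d.\ with common mean equal to the inner expectation in \cref{lemma:node_no_bound}. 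Because $\Omega(x)=\sqrt{\min(\tfrac12 x, 1-\exp(-x))} \in [0,1]$, each $Y_j$ lies in $[0,\alpha_c]$, so Hoeffding's inequality gives
\begin{equation*}
\Pr\!\left[\,\mathbb{E}_{S,\tilde S\sim \nu_c^{\lfloor m_c/2n\rfloor}}[Y_1] \;-\; \tfrac{1}{n}\sum_{j=1}^n Y_j \;>\; 2\alpha_c\sqrt{\tfrac{\log(2K/\delta)}{n\lfloor m_c/2n\rfloor}}\,\right] \le \tfrac{\delta}{2K}
\end{equation*}
for each class $c$ (the $\lfloor m_c/2n\rfloor$ in the denominator records the sample size per block, matching the chunking used to generate the $n$ independent pairs).

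Next I would apply a union bound over the $K$ classes so that the inequality above holds simultaneously for every $c$ with probability at least $1-\delta/2$. Substituting the resulting high-probability upper bound on the inner expectation into \cref{lemma:node_no_bound} applied with confidence $\delta/2$, and combining the two failure probabilities via another union bound, yields exactly the stated expression with the $\sqrt{\log(2/\delta)/(2m)}$ term inherited from \cref{lemma:node_no_bound}. The aggregate $\mathbb{E}_{c\sim\pi}[\,\cdot\,]$ wrapping is preserved because the concentration is done class by class.

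The main obstacle is conceptual rather than computational: it is the legitimacy of treating nodes as i.i.d.\ samples when they come from a shared underlying graph. I would handle this by pointing explicitly to \cref{assumption:node} and noting, following \citet{garg2020radmacher,verma2019stability,Wu2022HandlingDS}, that under this assumption the $L$-hop ego-graphs used to build the rooted homomorphism statistics $X_{\hat\nu_{S^j}, T_L(\Fc^r)}$ are independent across the $n$ blocks, which is precisely what Hoeffding requires. The remaining arithmetic, in particular the boundedness of $\alpha_c\Omega(\cdot)$ and the matching of the $\lfloor m_c/2n\rfloor$ block size, is routine and exactly parallels the graph-level argument.
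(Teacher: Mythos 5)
Your proposal takes a genuinely different route from the paper's, and the route has a gap. The paper proves \cref{lemma:node_class_bound} by reusing the graph-level proof of \cref{lemma:graph_class_bound} verbatim: start from the Wasserstein margin bound (\cref{lemma:chuang_bound}), invoke Chuang et al.'s Lemma~5 to replace the intractable $k$-variance $\operatorname{Var}_{m_c}(\phi_{\#}^{\Fc,L}\nu_c)$ with its block-based empirical estimator $\widehat{\operatorname{Var}}_{\lfloor m_c/2n\rfloor,n}$ plus a concentration term \emph{while everything is still expressed in Wasserstein distance}, and only afterward upper bound each empirical term $\mathcal{W}_1(\phi_{\#}^{\Fc,L}\nu_{S^j},\phi_{\#}^{\Fc,L}\nu_{\tilde S^j})$ by $\alpha_c\,\Omega(D_{KL}(\cdot\|\cdot))$ via \cref{thm:wassertein_tv}, \cref{lemma:tv_dkl}, and \cref{cor:KL_pushforward}. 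You instead convert to the KL form first (starting from \cref{lemma:node_no_bound}) and then try to estimate the resulting expectation empirically.

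The gap is in the sentence ``the random variables $Y_j$ \ldots are i.i.d.\ with common mean equal to the inner expectation in \cref{lemma:node_no_bound}.'' This is false as stated: $\mathbb{E}[Y_j]=\mathbb{E}_{S,\tilde S\sim\nu_c^{\lfloor m_c/2n\rfloor}}\bigl[\alpha_c\Omega(D_{KL}(X_{\hat\nu_S,T_L(\Fc^r)}\|X_{\hat\nu_{\tilde S},T_L(\Fc^r)}))\bigr]$, an expectation over blocks of size $\lfloor m_c/2n\rfloor$, whereas the inner expectation in \cref{lemma:node_no_bound} is over blocks of size $m_c$. Hoeffding thus controls $\mathbb{E}_{\nu_c^{\lfloor m_c/2n\rfloor}}[Y_1]-\tfrac1n\sum_j Y_j$, not $\mathbb{E}_{\nu_c^{m_c}}[\cdot]-\tfrac1n\sum_j Y_j$. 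To close the gap you would additionally need the monotonicity $\mathbb{E}_{\nu_c^{m_c}}[\alpha_c\Omega(D_{KL})]\le\mathbb{E}_{\nu_c^{\lfloor m_c/2n\rfloor}}[\alpha_c\Omega(D_{KL})]$, and you neither state nor prove it. It is not obviously immediate: $\Omega$ is concave and $D_{KL}$ is jointly convex, so $\Omega\circ D_{KL}$ inherits no useful convexity, and $D_{KL}$ between empirical measures on mismatched supports can even be infinite. The paper never confronts this question because the block-size reduction from $m_c$ down to $\lfloor m_c/2n\rfloor$ is carried out inside Chuang et al.'s $k$-variance machinery, where the needed Wasserstein-side monotonicity and concentration are packaged together; the KL conversion is then applied \emph{after} the estimator already lives at block size $\lfloor m_c/2n\rfloor$. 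If you swap the order as you propose, you must supply the KL-side monotonicity yourself.
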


% \begin{lemma}
% $(|E_G|/|E_F|)^{\alpha_c^*(F)}\leq \hom(F,G) \leq 2|E_G|^{\alpha_c^*(F)}$ where $\alpha_c^*(F)$ is the size of the biggest fractional independent set of $F$.
% \end{lemma}

% \subsection{VC Dimension by $f_{\comb}$ and $f_{\agg}$}
% The above lemmas assume injectivity of $f_{\comb}$ and $f_{\agg}$. However, this is not always guaranteed in GNN designs. For example, in GraphSAGE, a random walk of $k$ length is equivalent to defining $f_{\agg}$ as picking a $k$-path homomorphism image in $G$, instead of enumerating all $k$-path homomorphism images. Also, non-injective $f_{\comb}$, such as max and mean, are commonly used in GNNs. Such design hinders expressivity, but decreases VC dimension.

\section{Implications}
\label{sec:implication}

Given $\Fc$-MPNN described in ~\cref{sec:gnn_hom}, we seek to answer the question: \emph{how is the generalization bound affected by the choice of $\mathcal{F}$?} In this section, we answer the question based on \cref{lemma:non_class_bound}.
We first compare the entropy $\widetilde{D}_{KL}(\Fc, S, \tilde{S})$ on different choices of $\mathcal{F}$. We then discuss how $\beta_c$ and $L$ affect generalization bounds. Finally, we compare the bounds of GNNs described in \cref{sec:gnn_hom}.
For brevity, this section focuses on graph-level bounds, but similar results apply to node-level bounds.

\subsection{Implication of $\widetilde{D}_{KL}(\Fc, S, \tilde{S})$} 
\label{sec:implication_entropy}
In the following, we show two cases where $\widetilde{D}_{KL}(\Fc, S, \tilde{S})$ can be directly compared given different choices of $\mathcal{F}$.

\paragraph{Case 1: Gluing product of patterns.}

% We start with an example where $\mathcal{F} = \{\FlagGraph{2}{1--2}, \FlagGraph{3}{1--2,2--3}, \FlagGraph{4}{1--2,2--3,3--4,4--1}\}$. We know $\spasm(\{\FlagGraph{4}{1--2,2--3,3--4,4--1}\})=\{\FlagGraph{2}{1--2}, \FlagGraph{3}{1--2,2--3}\}$ which means the homomorphism count $\hom(\FlagGraph{4}{1--2,2--3,3--4,4--1}, S)$ can be inferred from $\hom(\FlagGraph{2}{1--2}, S)$ and $\hom(\FlagGraph{3}{1--2,2--3}, S)$~\cite{Curticapean2017-xp}, so having $\FlagGraph{4}{1--2,2--3,3--4,4--1}$ in $\mathcal{F}$ does not increase $\mathrm{H}(X_{\mathcal{F}\rightarrow \mu})$ or $\mathrm{H}(X_{\mathcal{F}\rightarrow \mu_c})$. 
We start with the gluing product (disconnected union) of two patterns, e.g., the gluing product of $\FlagGraph{3}{1--2,2--3}$ and $\FlagGraph{2}{1--2}$ is $\FlagGraph{5}{1--2,2--3,4--5}$. 
If $F_1$ and $F_2$ are in $\mathcal{F}$, additionally adding the gluing product
of $F_1$ and $F_2$, denoted $F_1F_2$, to $\mathcal{F}$ does not increase $\mathrm{H}(X_{T_L(\mathcal{F})\rightarrow S})$. This is because $\hom(F_1F_2, S)=\hom(F_1,S)\hom(F_2,S)$~\cite{lovasz2012}. For rooted patterns, similar results can be obtained for the joining operation as described by \citet{Barcelo2021-rs}.

% For certain graphs, it is also possible that the homomorphism numbers are linearly depended~\cite{lovasz2012}, i.e. $\hom(F',S)=\sum_{i=1}^{|\mathcal{F}|}a_ihom(F_1, S)$ for all $S\in \mathcal{S}$. In this situation, including $F'$ to $\mathcal{\mathcal{F}}$ also does not increase $\mathrm{H}(X_{\mathcal{F}\rightarrow \mathcal{S}})$. However, access to data is needed to analyze if such a dependency exists. 
\paragraph{Case 2: Spasm of patterns}
If $F_1 \in \spasm(F_2)$, then $F_1$ can be constructed from $F_2$ by contracting zero or more non-adjacent nodes, i.e. $\hom(F_2,F_1) > 0$. Then it is easy to see $\Phi_{F_1\rightarrow S} \subseteq \Phi_{F_2\rightarrow S}$. 
The same can be derived for the corresponding $\mathcal{F}$-pattern trees.
Hence, if $F_1 \in \spasm(F_2)$, then 
$\widetilde{D}_{KL}(T_L(\{F_1\}), S, \tilde{S}) \leq \widetilde{D}_{KL}(T_L(\{F_2\}), S, \tilde{S})$.

In general, given two graphs $F_1$ and $F_@$, while the corresponding KL divergence cannott be computed exactly due to the unknown distributions $\mu_c$ and $\pi$, we can roughly compare them using Shear's lemma.

\begin{lemma}
(Shearer's lemma~\cite{shearer1986}) 
 Let $\mathcal{Q}$ be a family of subsets of $[n]=\{1,\dots,n\}$ (possibly with repeats) such that each member of $[n]$ appears in at least $t$ times across $\mathcal{Q}$. For a random vector $(X_1, \dots , X_n)$,
 \begin{equation*}
 \mathrm{H}(X_1,\dots,X_n)\leq\frac{1}{t}\sum_{Q\in \mathcal{Q}}\mathrm{H}(X_Q)
 \end{equation*}
 where $X_Q = (X_j:j\in Q)$.
\end{lemma}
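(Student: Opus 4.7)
The plan is to prove Shearer's lemma using two standard information-theoretic tools: the chain rule of entropy and the monotonicity of conditional entropy (i.e., conditioning on more variables cannot increase entropy). The overall strategy is to lower bound $\mathrm{H}(X_Q)$ for each $Q \in \mathcal{Q}$ by a sum of conditional entropies of the form $\mathrm{H}(X_j \mid X_1, \dots, X_{j-1})$, and then combine these inequalities by double counting how often each coordinate appears across the family $\mathcal{Q}$.

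First, I would fix the natural linear order on $[n]$ and, for each $Q \in \mathcal{Q}$ and $j \in Q$, set $Q_{<j} := Q \cap \{1, \dots, j-1\}$. Applying the chain rule to $X_Q$ in increasing index order gives
\[\mathrm{H}(X_Q) \;=\; \sum_{j \in Q} \mathrm{H}(X_j \mid X_{Q_{<j}}).\]
Because $Q_{<j} \subseteq \{1, \dots, j-1\}$, conditioning on the full prefix can only shrink entropy, so
\[\mathrm{H}(X_j \mid X_{Q_{<j}}) \;\geq\; \mathrm{H}(X_j \mid X_1, \dots, X_{j-1}).\]
Hence $\mathrm{H}(X_Q) \geq \sum_{j \in Q} \mathrm{H}(X_j \mid X_1, \dots, X_{j-1})$ for every $Q \in \mathcal{Q}$.

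Next, I would sum this inequality over $Q \in \mathcal{Q}$ and swap the order of summation via the indicator $\mathbbm{1}\{j \in Q\}$, obtaining
\[\sum_{Q \in \mathcal{Q}} \mathrm{H}(X_Q) \;\geq\; \sum_{j=1}^n \left|\{Q \in \mathcal{Q} : j \in Q\}\right|\cdot \mathrm{H}(X_j \mid X_1, \dots, X_{j-1}).\]
The covering hypothesis gives $|\{Q : j \in Q\}| \geq t$ for every $j$, and conditional entropies are non-negative, so each coefficient can be replaced by $t$. Running the chain rule in reverse collapses $\sum_{j=1}^n \mathrm{H}(X_j \mid X_1, \dots, X_{j-1})$ back to $\mathrm{H}(X_1, \dots, X_n)$, and dividing by $t$ delivers the lemma.

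There is no serious technical obstacle here; the only subtlety worth flagging is to make sure the inequality in the ``prefix monotonicity'' step points in the right direction. Since we ultimately want $\sum_Q \mathrm{H}(X_Q)$ to dominate $t \cdot \mathrm{H}(X_1, \dots, X_n)$, we must condition on the larger prefix $\{1,\dots,j-1\}$ so that the resulting conditional entropy is the smaller quantity. The remainder is a clean double-counting argument, and the choice of linear order on $[n]$ is irrelevant because the final bound is symmetric in the coordinates.
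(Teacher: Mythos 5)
The paper states Shearer's lemma as a cited result from Shearer~(1986) and gives no proof of its own, so there is nothing in the paper to compare against. Your argument is the standard and correct proof: chain rule along a fixed linear order, monotonicity of conditional entropy under enlarging the conditioning set, and a double-counting step using the covering hypothesis together with non-negativity of conditional entropy.
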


We hereby give an example of using Shearer's lemma. Suppose that the graph $C_3 = \FlagGraph{3}{1--2,2--3,1--3}$ has the vertex set $\{1,2,3\}$.  Let $\{\varphi(1), \varphi(2), \varphi(3)\}$ be a set of nodes that corresponds to the homomophism $\varphi$.
Let $P_2 = \FlagGraph{2}{1--2}$, using Shearer's Lemma, we have 
\begin{align*}
\mathrm{H}&(X_{C_3 \rightarrow S}) = \frac{1}{2}\mathrm{H}(X_{\varphi(1)\rightarrow S}, X_{\varphi(2)\rightarrow S}, X_{\varphi(3)\rightarrow S})
\\ &\leq  \frac{1}{2}(\mathrm{H}(X_{\varphi(1)\rightarrow S},X_{\varphi(2)\rightarrow S})+ \mathrm{H}(X_{\varphi(2)\rightarrow S},X_{\varphi(3)\rightarrow S})
\\ &\text{\;\;\;\;} + \mathrm{H}(X_{\varphi(2)\rightarrow S},X_{\varphi(3)\rightarrow S}))
\\ & =\frac{3}{2}\mathrm{H}(X_{P_2\rightarrow S})
\end{align*}
Let $C_4=\FlagGraph{4}{1--2,2--3,3--4,1--4}$ be another pattern. Similarly, we can obtain $\mathrm{H}(X_{C_4 \rightarrow S}) \leq 2\mathrm{H}(X_{P_2\rightarrow S})$. 
Thus, the entropy of using \FlagGraph{3}{1--2,2--3,1--3} as a pattern is likely to be lower than \FlagGraph{4}{1--2,2--3,3--4,1--4}.
So it is also likely that $\widetilde{D}_{KL}(C_3, S, \tilde{S}) \leq \widetilde{D}_{KL}(C_4, S, \tilde{S})$.
Note that $P_2$ is used as the base for comparison in this example, but other base patterns can also be used as long as they are in the common $\spasm$ set of the patterns to compare. 
For example, \FlagGraph{3}{1--2,2--3} can be used as the base to compare 
\FlagGraph{4}{1--2,2--3,3--4,1--4} and \FlagGraph{4}{1--2,2--3,3--4}. 

\subsection{Implication of $\beta_c$ and $L$}
$\Omega(\cdot)$ is upper bounded by 1. Thus, when the graphs in $S$ are structurally diverse, $\Omega\bigl(\widetilde{D}_{KL}(\Fc, S, \tilde{S})\bigr)$ can reach the upper bound. In this case, the diameter $\beta_c= \Delta(\phi_{\#}^{\mathcal{F},L}\mu_{c})$ plays a dominant role in the bound. Because the Wasserstein distance is defined in terms of Euclidean distance, $\beta_c$ is also measured by Euclidean distance. Hence, when $|\mathcal{F}|$ grows, $\beta_c$ is likely to increase, or at least remain the same. The latter case holds if some patterns in $\mathcal{F}$ have zero homomorphisms in $S$. The same result holds for the layers $L$, as a larger $L$ leads to a larger $T_L(\mathcal{F})$.

\begin{restatable}[]{corollary}{dataindependentcomparison}
\label{lemma:GNN_compare_non_data}
The following holds for the generalization bounds described in \cref{lemma:non_class_bound} and \cref{lemma:node_no_bound}.
\begin{enumerate}
    \item For a fixed $\mathcal{F}$, the bounds at $L+1$ is larger or equal to the bounds at $L$.
    \item Given $\mathcal{F}'_{\hom} \supset \mathcal{F}_{\hom}$ ($\mathcal{F}'_{\sub}\supset \mathcal{F}_{\sub}$, resp.), for a fixed $L$, 
    the bounds of the HI-GNN (SI-GNN resp.) with $\mathcal{F}'_{\hom}$ ($\mathcal{F}'_{\sub}$ resp.) is higher than the HI-GNN (SI-GNN resp.) with $\mathcal{F}_{\hom}$ ($\mathcal{F}_{\sub}$ resp.)
    \item Given a fixed $L$, if $\mathcal{F}_{\hom}\neq \varnothing$ ($\mathcal{F}_{\sub}\neq \varnothing$, resp.), the bounds for HI-GNN (SI-GNN resp.) is larger than or equal to the one for 1-WL GNN. The equality holds when $\mathcal{F}_{\hom} = \{\FlagGraph{1}{}\}$ ($\mathcal{F}_{\sub} = \{\FlagGraph{1}{}\}$, resp.).
    \item Given a fixed $L$, the bounds for HI-GNN (SI-GNN resp.) is smaller than $k$-WL GNNs where $k$ is the largest treewidth of a pattern in $\mathcal{F}_{\hom}$ ($\mathcal{F}_{\sub}$ resp.) and $k>2$.
\end{enumerate}
\end{restatable}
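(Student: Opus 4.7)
The plan is to prove the four items by establishing two monotonicity properties of the bound in \cref{lemma:non_class_bound} (and analogously \cref{lemma:node_no_bound}), and then applying them to the pattern sets corresponding to each GNN family.

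First I would establish a \emph{monotonicity lemma}: whenever $\mathcal{T} \subseteq \mathcal{T}'$ are two sets of $\mathcal{F}$-pattern trees, the resulting bound using $\mathcal{T}'$ is at least the bound using $\mathcal{T}$. The argument has two steps. (a) The KL divergence term is monotone in dimensions appended to the joint random vector: by the chain rule of KL divergence, $D_{KL}(X_{\mathcal{T}'} \parallel Y_{\mathcal{T}'}) = D_{KL}(X_{\mathcal{T}} \parallel Y_{\mathcal{T}}) + \mathbb{E}[D_{KL}(X_{\mathcal{T}'\setminus\mathcal{T}}\mid X_{\mathcal{T}} \parallel Y_{\mathcal{T}'\setminus \mathcal{T}}\mid Y_{\mathcal{T}})] \geq D_{KL}(X_{\mathcal{T}} \parallel Y_{\mathcal{T}})$, and $\Omega(\cdot)$ is monotone non-decreasing. (b) The diameter $\beta_c = \Delta(\phi_{\#}^{\mathcal{F},L}\mu_c)$ is monotone because enlarging the pattern set appends coordinates to the Lov\'asz-style embedding, and Euclidean distance cannot decrease when coordinates are appended.

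Part 1 follows immediately because $T_L(\mathcal{F}) \subseteq T_{L+1}(\mathcal{F})$: every pattern tree of depth at most $L$ is also a pattern tree of depth at most $L+1$. Part 2 follows because $\mathcal{F} \subseteq \mathcal{F}'$ implies $T_L(\mathcal{F}) \subseteq T_L(\mathcal{F}')$, since any pattern tree built from $\mathcal{F}$ is also buildable from $\mathcal{F}'$. Part 3 uses the characterization of \citet{Barcelo2021-rs} that 1-WL GNN corresponds to $\Fc$-MPNN with $\mathcal{F} = \{\FlagGraph{1}{}\}$ (the backbones of $\mathcal{F}$-pattern trees already provide 1-WL expressivity on their own). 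Since $\{\FlagGraph{1}{}\} \subseteq \mathcal{F}_{\hom}$ whenever $\mathcal{F}_{\hom}$ is nonempty, the monotonicity lemma yields the inequality, with equality exactly when $\mathcal{F}_{\hom} = \{\FlagGraph{1}{}\}$; the SI-GNN case is analogous.

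Part 4 is the most delicate step and the one I expect to require the most care. Here I appeal to the classical results of \citet{Dvorak2010-rv} and \citet{Dell2018-bg}, which show that $k$-WL distinguishes exactly the graphs with equal homomorphism counts from all graphs of treewidth at most $k$; hence $k$-WL GNN corresponds to $\Fc$-MPNN whose pattern set contains all graphs of treewidth at most $k$. Since the operations used to build a pattern tree (taking a tree backbone and joining rooted copies of $\mathcal{F}$) preserve the bound on treewidth, any pattern tree built from $\mathcal{F}_{\hom}$ of maximum treewidth $k$ has treewidth at most $k$ and is therefore contained in the $k$-WL pattern tree set. The monotonicity lemma then gives the inequality for HI-GNN. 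The obstacle is the SI-GNN case: subgraph counting is not literally the same as homomorphism aggregation, so I would handle it via the $\spasm$ decomposition from \cref{sec:hom_image}, expressing an SI-GNN with pattern set $\mathcal{F}_{\sub}$ as captured by homomorphism aggregation over $\bigcup_{F\in\mathcal{F}_{\sub}}\spasm(F)$, whose maximum treewidth remains at most $k$ because contracting non-adjacent vertices cannot increase treewidth.
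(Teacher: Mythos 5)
Your overall strategy matches the paper's: both arguments reduce each item to monotonicity of the two controlling quantities, $\widetilde{D}_{KL}(\Fc, S, \tilde S)$ and the diameter $\beta_c$, under enlargement of the pattern-tree set $T_L(\Fc)$, and then realize each comparison as a set inclusion between pattern-tree families. Your monotonicity lemma with the chain rule for KL divergence is a clean way to make explicit what the paper's proof only asserts, and your identification of $T_L(\{\FlagGraph{1}{}\})$ with the trees of depth at most $L$ for Part~3 is exactly the paper's reasoning (one small slip: $\{\FlagGraph{1}{}\}\subseteq\mathcal{F}_{\hom}$ is not literally true for arbitrary nonempty $\mathcal{F}_{\hom}$; what you actually need, as you note parenthetically, is that the bare backbones always belong to $T_L(\mathcal{F}_{\hom})$ regardless of which patterns are joined).

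There is, however, a genuine gap in your Part~4 handling of SI-GNN. You assert that ``contracting non-adjacent vertices cannot increase treewidth,'' but this is false: identifying non-adjacent vertices is not a minor operation, and it can strictly increase treewidth. For example, identifying the two endpoints of a path $P_6$ (treewidth $1$) yields the cycle $C_5$ (treewidth $2$), and more generally $\spasm(P_n)$ contains dense graphs of unbounded treewidth for large $n$. So the reduction of an SI-GNN with pattern set $\mathcal{F}_{\sub}$ of maximum treewidth $k$ to an HI-GNN whose effective pattern set $\bigcup_{F\in\mathcal{F}_{\sub}}\spasm(F)$ still has treewidth at most $k$ does not go through; the spasm can leave the $k$-WL pattern family entirely. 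This step would need either a different argument linking SI-GNN to $k$-WL (e.g., redefining $k$ in terms of $\max_{F\in\mathcal{F}_{\sub}}\max_{F'\in\spasm(F)}\tw(F')$) or an additional hypothesis limiting the spasm. The paper's own proof glosses over the same point by simply asserting that the SI-GNN pattern set ``is a subset of the one for $k$-WL,'' so this is a place where making the argument explicit, as you did, exposes a real weakness rather than introduces a new one.
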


% \begin{figure}
% \label{fig:GNN_compare_non_data}
%     \centering
%     \includegraphics[width=0.95\linewidth]{}
%     \caption{Data-independent bounds comparison}
%     \label{fig:enter-label}
% \end{figure}

\begin{figure}[ht]
\centering     %%% not \center
\subfigure[ENZYMES]{\label{fig:hom_bound_enzymes}\includegraphics[width=0.49\columnwidth]{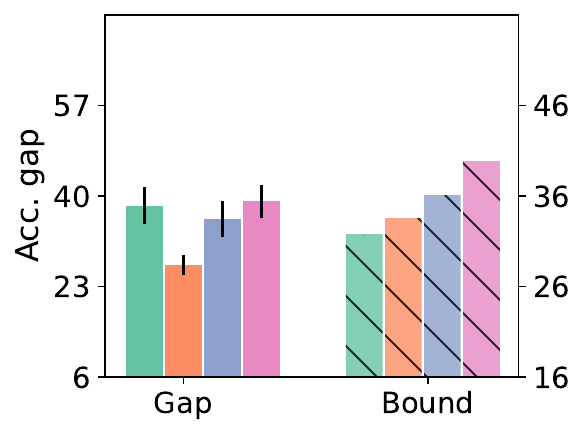}}
\subfigure[PROTEINS]{\label{fig:hom_bound_proteins}\includegraphics[width=0.49\columnwidth]{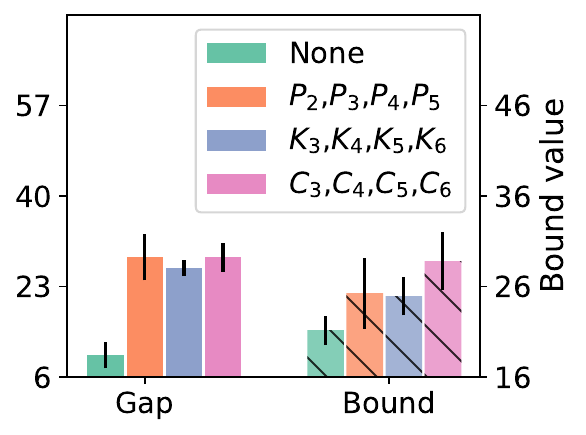}}
\caption{Accuracy gap and bound value using different homomorphism pattern sets, obtained from GCN of 4 layers.}
\label{fig:hom_bound}
\end{figure}

\section{Empirical Studies}
\paragraph{Compute  $X_{\hat{\mu}_{S^j}, T_L(\Fc)}$ and $X_{\hat{\mu}_{\tilde{S}^j}, T_L(\Fc)}$.}
An intuitive choice of $X_{\hat{\mu}_{S^j}, T_L(\Fc)}$ and $X_{\hat{\mu}_{\tilde{S}^j}, T_L(\Fc)}$ is the Lov{\'{a}}sz vector where each element in the vector is a homomorphism count of a pattern in $T_L(\mathcal{F})$. However, for GNNs listed in \cref{sec:gnn_hom}, the corresponding $T_L(\mathcal{F})$ is too large to compute the homomorphism counts explicitly:
for 1-WL GNNs, $T_L(\mathcal{F})$ contains all trees up to depth $L$, for homomorphism-injected GNNs, $T_L(\mathcal{F})$ contains all $\mathcal{F}$-pattern trees up to depth $L$~\citep{Barcelo2021-rs}. Fortunately, we can use colour histograms obtained from the $\mathcal{F}$-WL~\cite{Barcelo2021-rs} algorithm as an equally expressive choice for $X_{\hat{\mu}_{S^j}, T_L(\Fc)}$ and $X_{\hat{\mu}_{\tilde{S}^j}, T_L(\Fc)}$. 

A colour histogram is essentially a vector representation of a graph, where each element is the count of the appearance of a particular node colour.
The node-level $X_{\nu_S, T_L(\Fc^r)}$ and  $X_{\nu_{\tilde{S}}, T_L(\Fc^r)}$ can also be computed in this way, where a node is represented as a histogram of its neighbours' node colours. We stack node colour counts of each iteration to form the final node representation.
Details about colour histogram representation can be found in the well-known work of WL subtree graph kernel~\cite{Shervashidze2011-lw}.

\paragraph{Estimate KL Divergence.}
The KL divergence in \cref{lemma:graph_class_bound} and \cref{lemma:node_class_bound} measures the divergence of two sampled distributions.
Because the distributions are multi-dimensional, we estimate the corresponding multivariate KL divergence according to \citet{Perez-Cruz08}.

\subsection{Experiment Evaluation}

\begin{figure}
\centering     %%% not \center
\subfigure[PubMed: 4 layers]{\label{fig:hom_bound_pubmed}\includegraphics[width=0.49\columnwidth]{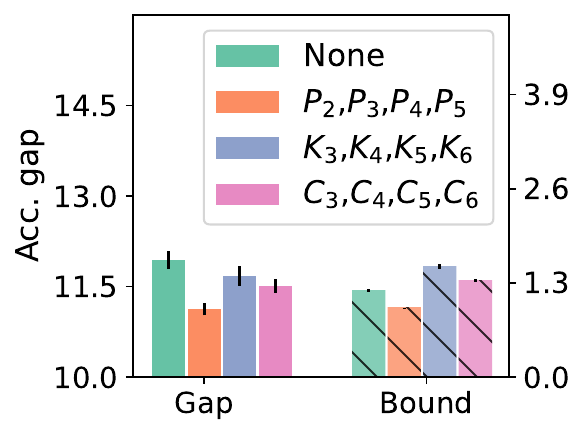}}
\subfigure[PATTERN: 16 layers]{\label{fig:hom_bound_pattern}\includegraphics[width=0.49\columnwidth]{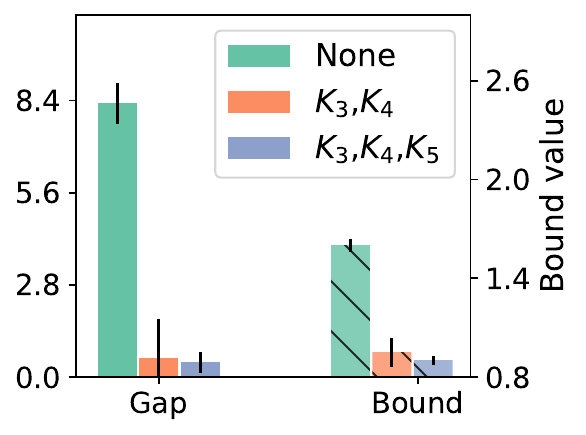}}
\caption{Accuracy gap and bound value using different homomorphism pattern sets}
\label{fig:hom_bound_node}
\end{figure}

We investigate how well the bounds align with observed generalization gap. In particular, we evaluate the generalization gaps of GCN on several datasets across node and graph classification tasks, with different numbers of layers and pattern sets, then compare the gaps with the proposed generalization bounds.

\paragraph{Tasks and Datasets}
We evaluate graph and node classification tasks. For graph classification, we use ENZYMES, PROTEINS and PTC-MR from the TU datasets~\cite{Morris+2020}. Each dataset is randomly split into 90\%/10\% for training and test. For node classification, we evaluated the generalization gap on Cora, CiteSeer, and PubMed citation datasets~\cite{SenNBGGE08,YangCS16}, each of which is divided into 60\% /40\% for training and test. We use the differences in accuracy and loss between training and test as indicators of the generalization gap. We train 2,000 and 500 iterations for node and graph classification tasks, respectively. Each training is repeated five times to obtain mean accuracy and standard deviation. For node classification, we additionally evaluate the inductive setting on the PATTERN dataset~\citep{DwivediJL0BB23}, which has over 12,000 artificially generated graphs resembling social networks or communities. The node classification task on PATTERN predicts whether a node belongs to a particular cluster or pattern. As shown by \citet{Barcelo2021-rs}, classification accuracy is greatly improved on PATTERN after injecting homomorphism information.

\paragraph{Setup and Configuration}
To calculate the bounds in \cref{lemma:graph_class_bound} and \cref{lemma:node_class_bound}, one needs to compute the ratio of Lipschitz constant and loss margin $\frac{L_c}{\gamma}$. As we focus on the upper-bound generalization gap, it is easier computationally to set the ratio to a ``safe" constant. In practice, we set $\frac{L_c}{\gamma}$ to 3 and 6 for graph and node tasks respectively. The two numbers are ``safe" because empirically they are larger than the ones estimated following the method used by \citet{Chuang2021-ik}. We use the largest pair-wise distance from the training set as the value of diameter in \cref{lemma:graph_class_bound} and \cref{lemma:node_class_bound}. We set $\delta$ to 0.01, so the computed bound is of high probability.

We use GCN~\citep{KipfW17} as the base model and inspect how the bounds capture the trend of changes in generalization gaps w.r.t.
\begin{enumerate*}[label=(\roman*)]
  \item number of layers, and
  \item ingested homomorphism and subgraph counts
  % \item number of 1-WL colour histograms
\end{enumerate*}.

\subsection{Results and Discussion}

\begin{figure}
\centering     %%% not \center
\subfigure[ENZYMES]{\label{fig:hom_sub_bound_enzymes}\includegraphics[width=0.49\columnwidth]{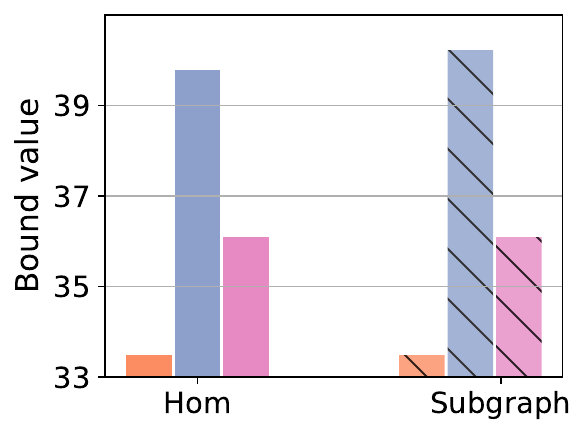}}
\subfigure[PubMed]{\label{fig:hom_sub_bound_pubmed}\includegraphics[width=0.49\columnwidth]{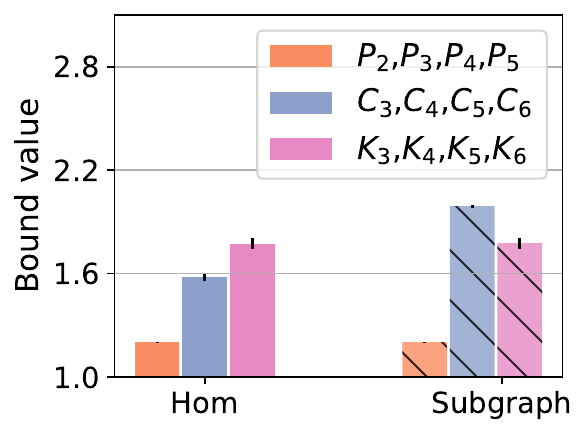}}
\caption{Bounds of homomorphism-injected and subgraph-injected models of 4 layers.}
\label{fig:hom_iso_bound}
\end{figure}

\begin{table}[t]
    \centering
    \resizebox{.9\columnwidth}{!}{
    \begin{tabular}{ccccc}
        \hline
         \multirow{2}{*}{Layer}{}&  & \multicolumn{3}{c}{Dataset} \\ \cmidrule{3-5}
         & & ENZYMES & PROTEINS & PTC-MR  \\
         \hline
         \multirow{4}{*}{1} & Acc. gap & 18.92\sd{4.28} & 7.22\sd{1.14} & 38.92\sd{5.54}\\
          & Loss gap & 0.69\sd{0.06} & 0.11\sd{0.00} & 1.03\sd{0.12} \\
          & Bound & 1.64\sd{0.0.00} & 1.51\sd{0.47} & 1.74\sd{0.00}\\
          & \# Histogram & 586 & 929 & 200 \\ \midrule
         \multirow{4}{*}{2} & Acc. gap & 20.78\sd{4.23} & 7.65\sd{3.77} & 40.24\sd{3.01}\\
          & Loss gap & 0.64\sd{0.08} & 0.10\sd{0.01}& 0.95\sd{0.09} \\
          & Bound & 10.02\sd{0.00} & 5.72\sd{0.71} & 3.70\sd{0.03}\\
          & \# Histogram & 595 & 996 & 257 \\ \midrule
         \multirow{4}{*}{3} & Acc. gap & 32.48\sd{1.57} & 9.33\sd{0.85} & 48.94\sd{5.88}\\
          & Loss gap & 0.89\sd{0.06} & 0.15\sd{0.15} & 1.25\sd{0.14} \\
          & Bound & 23.21\sd{0.00} & 14.82\sd{1.15} & 7.98\sd{0.08}\\
          & \# Histogram & 595 & 996 & 258 \\ \midrule
         \multirow{4}{*}{4} & Acc. gap & 38.18\sd{3.50} & 10.05\sd{2.43} & 47.13\sd{1.46} \\
          & Loss gap & 1.12\sd{0.07} & 0.21\sd{0.03} & 1.48\sd{0.25} \\
          & Bound & 31.73\sd{0.00} & 21.13\sd{1.63} & 13.43\sd{0.17}\\
          & \# Histogram & 595 & 996 & 258\\ \midrule
         \multirow{4}{*}{5} & Acc. gap & 43.22\sd{3.92} & 11.10\sd{1.23} & 44.15\sd{4.72}\\
          & Loss gap & 1.28\sd{0.21} & 0.28\sd{0.04} & 1.61\sd{0.20}\\
          & Bound & 36.10\sd{0.00} & 25.88\sd{2.23} & 18.26\sd{0.27}\\
          & \# Histogram & 595 & 996 & 258 \\ \midrule
         \multirow{4}{*}{6} & Acc. gap & 41.03\sd{3.07} & 12.30\sd{3.96} & 47.40\sd{3.67}\\
          & Loss gap & 1.38\sd{0.14} & 0.32\sd{0.07} & 1.60\sd{0.04} \\
          & Bound & 40.73\sd{0.00} & 29.13\sd{3.10} & 21.25\sd{0.08}\\
          & \# Histogram & 595  & 996 & 258 \\ \midrule
    \end{tabular}
    }
    \caption{Graph classification accuracy and loss gap using different numbers of layers, compared with bound and number of colour histograms (graphs distinguishable by 1-WL).}
    \label{tab:graph_layer_gap}
\end{table}

\paragraph{Bound w.r.t Number of Layers.}
As listed in \cref{tab:graph_layer_gap} and \cref{tab:node_layer_gap}, increasing layers increases both accuracy and loss gaps, as well as the generalization bound. We also listed the VC dimension bound proposed by \citet{morris23meet} in \cref{tab:graph_layer_gap}, which is essentially the number of graphs distinguishable by 1-WL (number of colour histograms). Compared with our bound, the VC bound cannot reflect generalization gap changes after the number of colour histograms stabilizes. For example, the number of colour histograms no longer changes after two layers for ENZYMES while the generalization gap continues to increase. Compared with the bounds in \citet{liao21pacbayes, garg2020radmacher} that are at least in the order of $10^4$, our bound is less vacuous as it matches the scale of the empirical gap.

\paragraph{Bound w.r.t Graph Patterns.}
Beyond the vanilla GCN, we are interested in understanding if the proposed bound can reflect changes in the generalization gap w.r.t. different graph patterns that are used to inject homomorphism and subgraph counts. The evaluated HI-GNN corresponds to LGP-GNN~\citep{Barcelo2021-rs}, the evaluated SI-GNN corresponds to a variant of GSN~\citep{Bouritsas2023-hj} without dividing subgraph counts by the number of automorphisms. In \cref{fig:hom_bound}, we plot the accuracy gap vs generalization bound from three different pattern sets, compared with the vanilla GCN. We use $P_n$, $K_n$ and $C_n$ to denote n-path, n-clique and n-cycle graphs, respectively, and use ``None" to denote vanilla GCN. We observe that different pattern sets cause different changes in the generalization gap. Namely, in ENZYMES, the path patterns lead to a smaller gap than cliques and cycle. Cycles lead to a larger gap than cliques. These changes in the generalization gap are also reflected in the corresponding bound, except for ENZYMES, where the gap for vanilla GCN is higher than others but the bound is lower. We also observe a similar trend in \cref{fig:hom_bound_node} for node classification. We found that in cases where injecting homomorphism information reduces generalization gaps, the corresponding $\mathcal{F}$-WL algorithm requires fewer iterations than 1-WL to stabilise. This contributes to smaller diameter and KL divergence and further leads to smaller bounds.

\paragraph{Homomorphism vs Subgraph Counts.}
Finally, we compare homomorphism-injected models with subgraph-injected models using the same pattern set, as shown in \cref{fig:hom_iso_bound}. We note for cycles, subgraph counts tend to cause a higher bound value than homomorphism counts, while the two bound values are the same for paths and cliques. Recall a pattern $F$'s subgraph count can be derived from homomorphism counts of the corresponding $\spasm(F)$. This observation matches the expectation that, in general, subgraph counts have a larger entropy than homomorphism counts, thus results in a larger bound value. For cliques, the $\spasm$ of a clique only contains itself, i.e. the homomorphism count of a clique equals its subgraph count, so the bound value remains the same. For paths, the $\spasm$ of each path in $\{P_2,P_3,P_4,P_5\}$ largely overlaps with the set itself $\{P_2,P_3,P_4,P_5\}$, leading to a smaller variance and thus smaller bound value. For cycles, the $\spasm$ of each cycle contains paths of smaller size, and these paths do not overlap with the cycle set, leading to larger variance and entropy and, thus, a larger bound value.

\begin{table}[t]
    \centering
    \resizebox{.83\columnwidth}{!}{
    \begin{tabular}{ccccc}
        \hline
         \multirow{2}{*}{Layer}{}&  & \multicolumn{3}{c}{Dataset} \\ \cmidrule{3-5}
         & & Cora & CiteSeer & PubMed\\
         \hline
         \multirow{3}{*}{1} & Acc. gap & 13.73\sd{0.04} & 25.51\sd{0.03}& 1.64\sd{0.01} \\
          & Loss gap & 0.48\sd{0.00} & 1.12\sd{0.01}& 0.03\sd{0.00}\\
          & Bound & 4.18\sd{0.00} & 4.63\sd{0.02} & 0.47\sd{0.01}\\
          % & \# Node Colour & 37 & 32 & 82\\ 
          \midrule
         \multirow{3}{*}{2} & Acc. gap & 13.41\sd{0.19} & 28.16\sd{0.21} & 4.46\sd{0.20}\\
          & Loss gap & 0.60\sd{0.01} & 1.84\sd{0.03} & 0.11\sd{0.00} \\
          & Bound & 4.57\sd{0.00} & 4.63\sd{0.02} & 0.55\sd{0.02}\\
          % & \# Node Colour & 1589 & 983 & 8060 \\ 
          \midrule
         \multirow{3}{*}{3} & Acc. gap & 14.09\sd{0.49} & 29.72\sd{0.35} & 10.36\sd{0.10}\\
          & Loss gap & 0.98\sd{0.04} & 2.83\sd{0.12} & 0.34\sd{0.00} \\
          & Bound & 6.09\sd{0.13} & 5.09\sd{0.02} & 1.03\sd{0.05}\\
          % & \# Node Colour & 2301 & 1835 & 12814 \\ 
          \midrule
         \multirow{3}{*}{4} & Acc. gap & 14.39\sd{0.34} & 30.48\sd{0.13} & 11.93\sd{1.47} \\
          & Loss gap & 1.21\sd{0.08} & 3.79\sd{0.08} &  0.46\sd{0.02} \\
          & Bound &  6.97\sd{0.14}& 5.73\sd{0.03}& 1.45\sd{0.02} \\
          % & \# Node Colour & 2363 & 2044 & 12990 \\ 
          \midrule
         \multirow{3}{*}{5} & Acc. gap & 14.65\sd{0.55} & 30.77\sd{0.41} & 11.99\sd{0.12} \\
          & Loss gap & 1.62\sd{0.12} & 5.15\sd{0.22}& 0.52\sd{0.02}\\
          & Bound & 7.96\sd{0.15}& 6.48\sd{0.05} & 1.77\sd{0.02}\\
          % & \# Node Colour & 2365 & 2088 & 12998 \\ 
          \midrule
         \multirow{3}{*}{6} & Acc. gap & 14.96\sd{0.68}& 30.22\sd{0.41} & 11.90\sd{0.25} \\
          & Loss gap & 1.93\sd{0.11} & 6.52\sd{0.25} & 0.52\sd{0.02}\\
          & Bound & 7.96\sd{0.15}& 7.24\sd{0.06}& 1.77\sd{0.02}\\
          % & \# Node Colour & 2365 & 2090 & 12998 \\ 
          \midrule
    \end{tabular}
    }
    \caption{Node classification accuracy and loss gap using different numbers of layers, compared with bound value.}
\label{tab:node_layer_gap}
\end{table}

\section{Conclusion}
In this study, we delve into the generalization ability of GNNs by examining the entropy inherent in graph homomorphism, and integrating it with information-theoretic measures. This approach enables us to devise graph structure-aware generalization bounds that are useful for both graph and node classification tasks. These bounds, applicable to various GNNs within a unified homomorphism framework, closely match the actual generalization gaps observed in empirical evaluations on both real-world and synthetic datasets.

% \section*{Acknowledgments}
% This was supported in part by......

%Bibliography
\bibliographystyle{icml2024}
\bibliography{references}  

\clearpage
\appendix
\section{Notations}
We list the notations used in \cref{tab:notations}.

\begin{table*}[ht]
    \centering
    \begin{tabular}{r|l}
    \hline
        $f_{\agg}$, $f_{\upd}$, $f_{\comb}$ $f_{\readout}$ & functions used in the aggregation, update, combine and readout steps of GNN \\
         $h_v$, $h_G$ & the vector representation of node $v$, graph $G$ resp. \\
         $h_{\varphi}$ & the vector representation of the homomorphism image under $\varphi$\\
         $h_{F\rightarrow G}$ & the aggregated representation of all homomorphism images from $F$ to $G$\\
         $F$, $G$ & graphs \\
         $V_G$, $E_G$ & the ndoe and edge sets of G \\
         $\varphi: V_F\rightarrow V_G$ & a homomorphism mapping from F to G\\
         $\Phi_{F\rightarrow G}$ & the set of all homomorphism mappings from $F$ to $G$\\
         $\mathcal{F}$ & a set of graph patterns \\
         $F_i$ & a unrooted pattern graph \\
         $F^r_i$ & a pattern graph rooted at node $r$\\
         $\hom(F,G)$ & the count of homomorphisms from $F$ to $G$\\
         $\Hom(\mathcal{F}, G)$ &  Lov{\'{a}}sz vector $\left(\hom(F_1, G), \dots, \hom(F_{|\mathcal{F}|}, G)\right)$\\
         $X_{F\rightarrow G}$& a random variables representing a homomorphism mapping $\varphi$ uniformly chosen from $\Phi_{F\rightarrow G}$.\\
         $X_{\mathcal{F}\rightarrow G}$& a collection of random variables $\left(X_{F_1\rightarrow G}, \dots, X_{F_{|\mathcal{F}|}\rightarrow G}\right)$\\
         $\mathrm{H}(X_{F\rightarrow G})$ & the entropy of the random variable $X_{F\rightarrow G}$\\
         $\mathrm{H}(X_{\mathcal{F}\rightarrow G})$ & the joint entropy $\mathrm{H}(X_{F_1\rightarrow G}, \dots, X_{F_{|\mathcal{F}|}\rightarrow G})$ \\
         $\spasm(G)$ & the set of all homomorphic images of G\\
         $\mathcal{W}_p(\mu, \nu)$ & $p$-Wasserstein distance of two probability distributions $\mu$ and $\nu$ \\
         $\operatorname{TV}(\mu,\nu)$ &  total variation between two distributions $\mu$ and $\nu$ \\
         $\Delta(\mathcal{X})$ & the diameter of the space $\mathcal{X}$ \\
        $D_\text{KL}(\mu\|\nu)$ & KL-divergence of distributions $\mu$ and $\nu$\\
        $\operatorname{Var}_{k,p}(\mu)$ & Wasserstein-$p$ $k$-variance of the distribution $\mu$\\
        $\operatorname{Var}_{k}(\mu)$ & Wasserstein-$1$ $k$-variance of the distribution $\mu$\\
        $\phi$, $\Theta$ & a feature learner and its space\\
        $f$, $\Psi$ & a classifier and its function space\\
        $\mathcal{Y}$ & output space of a classifier of $K$ classes\\
        $\mathcal{X}$ & vector input space of a feature learner\\
        $\rho_f(\cdot)$ & a multi-class margin loss function of the classifier $f$\\
        $\hat{R}_{\mu}(\cdot)$ & expected population loss\\
        $\hat{R}_{\gamma, m}(\cdot)$ & $\gamma$-margin empirical loss on $m$ samples\\
        $m$ & the number of samples \\
        $m_c$ & the number of samples in class $c$ \\
        $\mu_c$, $\nu_c$ & the marginal distribution of data in class $c$ \\
        $\phi_{\#}\mu$ & the push-forward measure of the distribution $\mu$ w.r.t the feature learner $\phi$ \\
        $p$ & the distribution of classes \\
        $\gen(f\circ \phi)$ & the generalization gap of a model compose of $f$ and $\phi$\\
        $\Lip(\cdot)$ & the margin Lipschitz constant of a function \\
        $\tw(F)$ & the tree width of $F$ \\
        $\td(F)$ & the depth of the tree graph $F$ \\
        $\Delta(\cdot)$ & diameter of a space \\
        $T_L(\mathcal{F})$ & the set of rooted $\mathcal{F}$-pattern trees of depth at most $L$ \\
        $\mathcal{F}_{\hom}$, $\mathcal{F}_{\sub}$ & the set of homomorphism/subgraph patterns used by HI-GNN/SI-GNN\\
    \hline
    \end{tabular}
    \caption{Notatoin Table}
    \label{tab:notations}
\end{table*}

\section{Additional Experiment Configuration}
We follow the same setup from \citet{morris23meet, tang2023towards, CongRM21} to remove regularizations including dropout and weight decay, and use the Adam optimizer. We use 64 as the inner dimension for intermediate layers. We have also run the experiments with 128 inner dimension and observed similar results.
We train for 2000 and 500 iterations for node and graph tasks, both using 0.001 as the learning rate. We record the best accuracy and loss for both training and test set and report the difference.

\section{Missing Proofs}

Before we prove the proposed bounds, we introduce a few definitions and theorems from previous works, as they are useful in the later proofs.

\subsection{Margin Bound with Wasserstein Distance}
\begin{restatable}[$p$-Wasserstein Distance~\cite{Chuang2021-ik}]{definition}{pwassersteindistance}
Let $\mu$ and $\nu\in\textsc{Prob}(\mathbb{R}^d)$ be two probability measures. The \emph{$p$-Wasserstein distance} with the Euclidean cost function $\parallel\cdot \parallel$ is defined as
\begin{equation*}
    \mathcal{W}_p(\mu, \nu):=\inf _{\pi \in \Pi(\mu, \nu)}\left(\mathbb{E}_{(x, y) \sim \pi}\parallel x - y \parallel^p\right)^{\frac{1}{p}}
\end{equation*}
where $\Pi(\mu, \nu)\subseteq \textsc{Prob}(\mathbb{R}^d \times \mathbb{R}^d)$ denotes the set of all couplings measure whose marginals are $\mu$ and $\nu$, respectively.
\end{restatable}

\begin{remark}
    Wasserstein distance measures the minimal cost to transport a distribution $\mu$ to a distribution $\nu$.
\end{remark}

\begin{restatable}[Wasserstein-$p$ $k$-variance~\cite{solomonGN22}]{definition}{kvardefinition}
Given a probability distribution $\mu\in \textsc{Prob}(\mathbb{R}^d)$ and a parameter $k\in \mathbb{N}$, the \emph{Wasserstein-$p$ $k$-variance} is defined as
\begin{equation*}
    \operatorname{Var}_{k,p}(\mu) = c_p(k, d) \cdot \mathbb{E}_{S,\tilde{S} \sim \mu^k} \left[ \mathcal{W}^p_p(\mu_S, \mu_{\tilde{S}}) \right],
\end{equation*}
where $\mu_S$ and $\mu_{\tilde{S}}$ are empirical measures of the same size and $c_p(k,d)$ is a normalization term.
\end{restatable}

\begin{remark}[Wasserstein-$1$ $k$-variance]
In this work, we focus on the Wasserstein-1 $k$-variance: 
\begin{equation*}
\operatorname{Var}_k(\mu) := \mathbb{E}_{S,\tilde{S}\sim \mu^k} [\mathcal{W}_1(\mu_S, \mu_{\tilde{S}})].
\end{equation*}
\end{remark}

\citet{Chuang2021-ik} studied generalization using Wasserstein distance and revealed that the concentration and separation of features play crucial roles in generalization. 

\begin{restatable}[\citet{Chuang2021-ik}]{lemma}{chuangbound}
\label{lemma:chuang_bound}
    Let $f = [f_1,\dots, f_K] \in \Psi$  where $\Psi = \Psi_1 \times \dots \times \Psi_K$, $f_i\in \Psi_i$, and $\Psi_i: \mathcal{X} \rightarrow \mathbb{R}$.
    Fix $\gamma > 0$. Denote the generalization gap as $\gen(f\circ \phi) = R_\mu(f \circ \phi) - \hat{R}_{\gamma, m}(f \circ \phi)$.
    The following bound holds for all $f\in \Psi$ with probability at least $1-\delta > 0$:
    \begin{align*}
        \gen(f\circ \phi) \leq &\mathbb{E}_{c \sim \pi}\left[\frac{\operatorname{Lip}\left(\rho_f(\cdot, c)\right)}{\gamma} \operatorname{Var}_{m_c}\left(\phi_{\#} \mu_c\right)\right]\\
        & +\sqrt{\frac{\log (1 / \delta)}{2 m}},
    \end{align*}
    where 
    $\Lip\left(\rho_f(\cdot, c)\right)=\sup _{x, x^{\prime} \in \mathcal{X}} \frac{\left|\rho_f(\phi(x), c)-\rho_f\left(\phi\left(x^{\prime}\right), c\right)\right|}{\left\|\phi(x)-\phi\left(x^{\prime}\right)\right\|_2}$ is the margin Lipschitz constant w.r.t $\phi$.
\end{restatable}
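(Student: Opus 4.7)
The plan is to combine three classical ingredients: a Lipschitz surrogate that smooths the $0$--$1$ loss, Kantorovich--Rubinstein duality to convert an expectation gap into a Wasserstein distance, and a bounded-differences concentration for the $\sqrt{\log(1/\delta)/2m}$ slack.

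First, I would introduce the $\gamma$-ramp $\ell_\gamma\in[0,1]$, which is $1/\gamma$-Lipschitz and sandwiches the margin indicators via $\mathbbm{1}[t\leq 0]\leq \ell_\gamma(t)\leq \mathbbm{1}[t\leq \gamma]$. Applied pointwise this gives $R_\mu(f\circ\phi)-\hat{R}_{\gamma,m}(f\circ\phi)\leq \mathbb{E}_\mu[\ell_\gamma\circ\rho_f]-\hat{\mathbb{E}}_S[\ell_\gamma\circ\rho_f]$. Conditioning on the label $y=c$ and averaging over $c\sim\pi$, the right-hand side decomposes into per-class gaps between the pushforwards $\phi_{\#}\mu_c$ and $\phi_{\#}\hat{\mu}_{c,S}$. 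Since $g_c(z):=\ell_\gamma(\rho_f(z,c))$ is Lipschitz on the image of $\phi$ with constant $\operatorname{Lip}(\rho_f(\cdot,c))/\gamma$ by the chain rule, Kantorovich--Rubinstein duality yields
\[
\mathbb{E}_{\phi_{\#}\mu_c}[g_c] - \mathbb{E}_{\phi_{\#}\hat{\mu}_{c,S}}[g_c] \leq \tfrac{\operatorname{Lip}(\rho_f(\cdot,c))}{\gamma}\,\mathcal{W}_1(\phi_{\#}\mu_c,\phi_{\#}\hat{\mu}_{c,S}).
\]

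Second, I would pass from the data-dependent Wasserstein distance to the $m_c$-variance. An auxiliary sample $\tilde{S}\sim\mu_c^{m_c}$ satisfies $\mathbb{E}_{\tilde{S}}[\phi_{\#}\hat{\mu}_{c,\tilde{S}}]=\phi_{\#}\mu_c$ in the weak-integration sense, and combining the supremum-of-$1$-Lipschitz-test-functions form of $\mathcal{W}_1$ with Fubini lets the supremum pass through the expectation, yielding $\mathcal{W}_1(\phi_{\#}\mu_c,\phi_{\#}\hat{\mu}_{c,S}) \leq \mathbb{E}_{\tilde{S}}[\mathcal{W}_1(\phi_{\#}\hat{\mu}_{c,\tilde{S}},\phi_{\#}\hat{\mu}_{c,S})]$. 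Averaging over $S$ and identifying the double-empirical expectation with $\operatorname{Var}_{m_c}(\phi_{\#}\mu_c)$ produces the expected-gap bound $\mathbb{E}_{c\sim\pi}[\operatorname{Lip}(\rho_f(\cdot,c))\operatorname{Var}_{m_c}(\phi_{\#}\mu_c)/\gamma]$.

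Finally, McDiarmid's inequality applied to the ramp-loss empirical average, whose single-coordinate perturbation is bounded by $1/m$ because $\ell_\gamma\in[0,1]$, contributes the residual $\sqrt{\log(1/\delta)/2m}$ with probability at least $1-\delta$; concatenating the three steps yields the stated inequality. I expect the main obstacle to be the class-frequency book-keeping: the empirical counts $m_c$ are random and need not equal $m\,\pi(c)$, so the per-class decomposition cannot be invoked naively. The natural fix is to condition on the class-label multiset when applying Kantorovich--Rubinstein and the Jensen-style exchange, so each $m_c$ is fixed on that event, and then integrate back, letting McDiarmid absorb the mismatch; this conditioning is precisely what manufactures the class-averaged expectation $\mathbb{E}_{c\sim\pi}[\cdot]$ on the right-hand side.
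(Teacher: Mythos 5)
The paper itself gives no proof of this lemma: it is imported verbatim from \citet{Chuang2021-ik} and used as a black box in the proofs of \cref{lemma:non_class_bound} and \cref{lemma:graph_class_bound}, so the only meaningful comparison is against the proof in the cited reference. Your reconstruction follows the same route as that original proof: the ramp-loss sandwich, the per-class Kantorovich--Rubinstein step, the ghost-sample/Jensen exchange $\mathcal{W}_1(\phi_{\#}\mu_c,\phi_{\#}\hat{\mu}_{c,S})\leq\mathbb{E}_{\tilde{S}}[\mathcal{W}_1(\phi_{\#}\hat{\mu}_{c,\tilde{S}},\phi_{\#}\hat{\mu}_{c,S})]$ (which is exactly how the $k$-variance enters, and which you justify correctly via the dual formulation plus Jensen), and a bounded-differences argument for the $\sqrt{\log(1/\delta)/2m}$ tail. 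The class-frequency book-keeping you flag at the end is a real issue and is resolved essentially as you propose, by conditioning on the label multiset so that each $m_c$ is fixed when the per-class transport argument is invoked.

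The one step that is genuinely under-specified is the uniformity over $f\in\Psi$. McDiarmid applied to the ramp-loss empirical average of a \emph{fixed} $f$ yields a statement holding with probability $1-\delta$ for that $f$ only, whereas the lemma asserts the inequality simultaneously for all $f\in\Psi$ with an $f$-dependent right-hand side through $\Lip(\rho_f(\cdot,c))$. The standard remedy of applying McDiarmid to $S\mapsto\sup_{f\in\Psi}\bigl[\mathbb{E}_\mu[\ell_\gamma\circ\rho_f]-\hat{\mathbb{E}}_S[\ell_\gamma\circ\rho_f]\bigr]$ still has $1/m$ bounded differences, but bounding the expectation of that supremum through your chain of inequalities leaves $\sup_{f\in\Psi}\Lip(\rho_f(\cdot,c))$ rather than the Lipschitz constant of the particular $f$. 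The cleaner route is to note that after Kantorovich--Rubinstein the only $f$-dependence on the right-hand side sits in the Lipschitz constants, while the quantities $\mathbb{E}_{\tilde{S}}[\mathcal{W}_1(\phi_{\#}\hat{\mu}_{c,\tilde{S}},\phi_{\#}\hat{\mu}_{c,S})]$ are $f$-free random variables, so the concentration should be applied to those (or the claim restricted to a fixed, learned $f$, which is all this paper actually needs downstream). Apart from making that uniformization explicit, your sketch is a faithful reconstruction of the cited argument.
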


\subsection{Total Variation and KL-divergence}
\begin{definition}[Total Variation]
    The \emph{total variation} between two distributions $\mu$ and $\nu$ on $\mathbb{R}^d$ is
    \begin{equation*}
    \operatorname{TV}(\mu, \nu):= \sup_{A\in \mathbb{R}^d}\{\mu(A) - \nu(A)\}.
    \end{equation*}
\end{definition}
where $\mu(A)$ and $\nu(A)$ are the cumulative probabilities on $\mu$ and $\nu$ respectively over the set $A$.

\begin{restatable}[Theorem 6.15~\cite{villani2009optimal}]{theorem}{villanitvbound}
\label{thm:wassertein_tv}
    The 1-Wasserstein distance is dominated by the total variation, i.e. let $\mu$ and $\nu$ be two distributions on $\mathcal{M}\in \mathbb{R}^d$, $\Delta(\mathcal{M})$ be the diameter of $\mathcal{M}$,
    we have $\mathcal{W}_1(\mu,\nu)\leq \Delta(\mathcal{M})\operatorname{TV}(\mu,\nu)$.
\end{restatable}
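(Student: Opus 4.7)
The plan is to exhibit a coupling of $\mu$ and $\nu$ whose transport cost is at most $\Delta(\mathcal{M})\operatorname{TV}(\mu,\nu)$; since $\mathcal{W}_1$ is an infimum over couplings, this yields the bound. The key device is the maximal (Dobrushin) coupling, which concentrates as much mass as possible on the diagonal $\{x=y\}$ and pays no cost there.

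First, I would recall the primal form
\begin{equation*}
\mathcal{W}_1(\mu,\nu) = \inf_{\pi \in \Pi(\mu,\nu)} \int_{\mathcal{M}\times\mathcal{M}} \|x-y\|\, d\pi(x,y),
\end{equation*}
and set up the Lebesgue decomposition $\mu \wedge \nu$, the common part of the two measures, whose density (with respect to any dominating measure) is the pointwise minimum of the densities of $\mu$ and $\nu$. A short computation gives $(\mu\wedge\nu)(\mathcal{M}) = 1 - \operatorname{TV}(\mu,\nu)$, using the stated definition $\operatorname{TV}(\mu,\nu) = \sup_A\{\mu(A)-\nu(A)\}$ together with the Hahn decomposition of $\mu-\nu$.

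Next I would construct the coupling $\pi$ explicitly: with probability $1-\operatorname{TV}(\mu,\nu)$ draw $X=Y$ from the normalized measure $(\mu\wedge\nu)/(1-\operatorname{TV}(\mu,\nu))$, and with probability $\operatorname{TV}(\mu,\nu)$ draw $X$ and $Y$ independently from the normalized excess measures $(\mu-\mu\wedge\nu)/\operatorname{TV}(\mu,\nu)$ and $(\nu-\mu\wedge\nu)/\operatorname{TV}(\mu,\nu)$ respectively. A routine check confirms that the marginals of $\pi$ are $\mu$ and $\nu$. Splitting the cost integral over the diagonal and its complement, the diagonal contributes zero while the complement is bounded by $\Delta(\mathcal{M})\cdot\pi(\{X\neq Y\}) \leq \Delta(\mathcal{M})\cdot\operatorname{TV}(\mu,\nu)$.

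The main obstacle is not the inequality itself, which follows immediately once the coupling is in hand, but the measure-theoretic bookkeeping: verifying the identity $(\mu\wedge\nu)(\mathcal{M}) = 1 - \operatorname{TV}(\mu,\nu)$ and checking that the piecewise-defined $\pi$ is a valid probability measure with the prescribed marginals. Both are standard and ultimately reduce to the Hahn--Jordan decomposition of the signed measure $\mu-\nu$, so in the interest of brevity I would cite Theorem 6.15 of \citet{villani2009optimal} for the full argument.
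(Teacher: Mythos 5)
Your coupling argument is correct and is the standard proof of this inequality: the maximal (Dobrushin) coupling puts mass $1-\operatorname{TV}(\mu,\nu)$ on the diagonal at zero cost, and the off-diagonal remainder, of mass $\operatorname{TV}(\mu,\nu)$, is supported on $\mathcal{M}\times\mathcal{M}$ and so costs at most $\Delta(\mathcal{M})$ per unit; the identity $(\mu\wedge\nu)(\mathcal{M})=1-\operatorname{TV}(\mu,\nu)$ is consistent with the paper's one-sided definition $\operatorname{TV}(\mu,\nu)=\sup_A\{\mu(A)-\nu(A)\}$. Note the paper itself does not prove this statement---it cites Theorem 6.15 of Villani directly---so your sketch is a supplement rather than an alternative route; nothing in your argument conflicts with how the bound is subsequently used in \cref{lemma:non_class_bound} and \cref{lemma:graph_class_bound}.
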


\begin{restatable}[Pinsker’s and Bretagnolle–Huber’s (BH) inequalities~\cite{polyanskiy2014lecture}]{lemma}{BHinequality}
\label{lemma:tv_dkl}
Let $\mu$ and $\nu$ be two probability distributions on $\mathcal{M}$, and $D_\text{KL}(\mu\|\nu)$ be the KL-divergence. Then
\begin{equation*}
    \operatorname{TV}(\mu,\nu) \leq \sqrt{\min(\frac{1}{2}D_\text{KL}(\mu\|\nu), 1-\exp(-D_\text{KL}(\mu\|\nu)))}.
\end{equation*}
\end{restatable}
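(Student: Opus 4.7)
The plan is to prove the two inequalities inside the minimum separately and then take the pointwise minimum. That is, I will establish
\begin{equation*}
\operatorname{TV}(\mu,\nu) \leq \sqrt{\tfrac{1}{2} D_{KL}(\mu\|\nu)}
\qquad\text{(Pinsker)}
\end{equation*}
and
\begin{equation*}
\operatorname{TV}(\mu,\nu) \leq \sqrt{1-\exp(-D_{KL}(\mu\|\nu))}
\qquad\text{(Bretagnolle--Huber)}
\end{equation*}
independently, each by a short self-contained argument; the stated inequality is then their conjunction since $\operatorname{TV}(\mu,\nu)$ is bounded by the smaller of the two right-hand sides.

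For Pinsker's inequality I would reduce to the Bernoulli case. Let $f = d\mu/d\nu$ and $A = \{f \geq 1\}$, and set $p = \mu(A)$, $q = \nu(A)$. A direct calculation gives $\operatorname{TV}(\mu,\nu) = p - q$, and the data processing inequality for KL divergence under the two-cell partition $\{A, A^c\}$ yields $D_{KL}(\mu\|\nu) \geq d(p\|q) := p\log(p/q) + (1-p)\log((1-p)/(1-q))$. It therefore suffices to prove the scalar inequality $d(p\|q) \geq 2(p-q)^2$ for $p,q \in (0,1)$. I would show this by fixing $p$, defining $g(q) = d(p\|q) - 2(p-q)^2$, and checking that $g(p)=0$, $g'(p)=0$, and $g''(q) = p/q^2 + (1-p)/(1-q)^2 - 4 \geq 0$ (the last by AM-GM or Cauchy--Schwarz on the $q,1-q$ pair).

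For Bretagnolle--Huber I would introduce the Bhattacharyya coefficient $BC(\mu,\nu) = \int \sqrt{f}\,d\nu$ and prove the two standalone facts
\begin{equation*}
\operatorname{TV}(\mu,\nu)^2 \leq 1 - BC(\mu,\nu)^2,
\qquad
BC(\mu,\nu) \geq \exp\!\bigl(-\tfrac{1}{2}D_{KL}(\mu\|\nu)\bigr).
\end{equation*}
The first follows from Cauchy--Schwarz: writing $|f - 1| = |\sqrt{f} - 1|(\sqrt{f} + 1)$ and integrating against $\nu$ gives $2\operatorname{TV}(\mu,\nu) \leq \sqrt{\int(\sqrt{f}-1)^2 d\nu}\cdot\sqrt{\int(\sqrt{f}+1)^2 d\nu}$, and the two factors evaluate to $2(1-BC)$ and $2(1+BC)$ respectively, whose product is $4(1-BC^2)$. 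The second follows from Jensen applied to the concave $\log$: $-\tfrac{1}{2}D_{KL}(\mu\|\nu) = \mathbb{E}_\mu[\log f^{-1/2}] \leq \log \mathbb{E}_\mu[f^{-1/2}] = \log BC(\mu,\nu)$. Chaining the two gives $\operatorname{TV}^2 \leq 1 - BC^2 \leq 1 - \exp(-D_{KL})$.

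The main obstacle I expect is choosing the right intermediate quantity rather than pushing through any single heavy calculation: the Bhattacharyya coefficient is the right bridge because it linearises beautifully under both the Cauchy--Schwarz step (controlling $\operatorname{TV}$) and the Jensen step (controlling $D_{KL}$), whereas a direct comparison of $\operatorname{TV}$ and $D_{KL}$ would require fiddly case analysis. Once the two building blocks are proven, combining them and taking the minimum is immediate, giving the claimed bound.
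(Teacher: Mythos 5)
The paper does not prove this lemma at all: it is imported verbatim from \citet{polyanskiy2014lecture} as a known result, so there is no in-paper argument to compare against and your proof has to stand on its own. Your overall architecture is the right one — prove Pinsker and Bretagnolle--Huber separately and take the minimum — and your Bretagnolle--Huber half is correct: the Cauchy--Schwarz step giving $\operatorname{TV}(\mu,\nu)^2\leq 1-BC(\mu,\nu)^2$ and the Jensen step giving $BC(\mu,\nu)\geq \exp(-\tfrac12 D_{KL}(\mu\|\nu))$ both check out (modulo the routine remark that when $\mu\not\ll\nu$ one has $D_{KL}=\infty$ and the bound is vacuous).

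The Pinsker half, however, contains a step that fails: the claim $g''(q)=p/q^2+(1-p)/(1-q)^2-4\geq 0$ is false. Take $p=0.1$, $q=0.3$: then $p/q^2+(1-p)/(1-q)^2=10/9+90/49\approx 2.95<4$, so $g$ is not convex in $q$ for fixed $p$, and neither AM--GM nor Cauchy--Schwarz can rescue a false inequality. The scalar inequality $d(p\|q)\geq 2(p-q)^2$ is still true, but the standard route is a first-derivative sign argument rather than a second-derivative one: compute
\begin{equation*}
g'(q)=-\frac{p}{q}+\frac{1-p}{1-q}+4(p-q)=(q-p)\left(\frac{1}{q(1-q)}-4\right),
\end{equation*}
and since $q(1-q)\leq \tfrac14$ the second factor is nonnegative, so $g'$ has the sign of $q-p$; hence $g$ attains its global minimum at $q=p$, where $g(p)=0$, giving $g\geq 0$. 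With that substitution your reduction to the two-cell partition $\{A,A^c\}$ via data processing is fine and the lemma follows.
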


\subsection{$\mathcal{F}$-pattern Trees and Homomorphism}
We introduce Theorem 1 from~\citet{Barcelo2021-rs}.

\begin{restatable}[Theorem 1 of~\citet{Barcelo2021-rs}]{theorem}{barcelotheorem1}
\label{lemma:barcelo1}
For any finite collection $\mathcal{F}$ of patterns, vertices $v$ and $w$ in a graph $G$ $\hom(T^r,G^v) = \hom(T^r,G^w)$ for every rooted $\mathcal{F}$-pattern tree $T^r$. Similarly, $G$ and $H$ are undistinguishable by the $\mathcal{F}$-WL-test if and only if $\hom(T,G) = hom(T,H)$, for every (unrooted) $\mathcal{F}$-pattern tree.
\end{restatable}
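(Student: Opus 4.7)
The plan is to prove both directions of the rooted statement by induction on the depth $L$ of the pattern tree, mirroring the classical equivalence between 1-WL and tree-homomorphism-indistinguishability of Dvořák and Dell--Grohe--Rattan, and then derive the unrooted statement by a root-averaging argument. The key observation enabling this adaptation is that, unlike in the vanilla 1-WL case, the initial $\mathcal{F}$-WL color of a vertex $v$ is already defined to be the Lov\'asz tuple $(\hom(F^r, G^v))_{F^r \in \mathcal{F}}$, so the ``base layer'' of information fed into the refinement is itself expressible as homomorphism counts.

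For the forward direction ($\Rightarrow$), I will show by induction on $L$ that if $v$ and $w$ share the same $\mathcal{F}$-WL color after $L$ rounds, then $\hom(T^r, G^v) = \hom(T^r, G^w)$ for every rooted $\mathcal{F}$-pattern tree $T^r$ of depth at most $L$. The base case $L=0$ is direct: the backbone is a single vertex decorated with patterns from $\mathcal{F}$, so the count is just a product of Lov\'asz-tuple entries. For the inductive step I use the standard multiplicative factorization
\begin{equation*}
\hom(T^r, G^v) = \prod_j \hom(F_j^r, G^v) \cdot \prod_i \Bigl(\sum_{u \in N(v)} \hom(T_i^{r_i}, G^u)\Bigr),
\end{equation*}
where the $F_j^r \in \mathcal{F}$ are patterns joined at the root and the $T_i^{r_i}$ are the depth-$\le L$ child subtrees. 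The first factor is determined by the initial color and hence matches at $v$ and $w$; the second matches because the $\mathcal{F}$-WL refinement rule forces $v$ and $w$ to have identical multisets of round-$L$ neighbor colors, and by the inductive hypothesis each such color determines the inner sum.

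For the reverse direction ($\Leftarrow$) I will argue contrapositively: if $v$ and $w$ are first separated at round $L$, then some rooted $\mathcal{F}$-pattern tree of depth $\le L$ witnesses the difference. I proceed by strong induction. At round $0$ the Lov\'asz tuples differ, so a single-vertex backbone with a separating pattern $F^r \in \mathcal{F}$ suffices. At round $L+1$, the multisets of round-$L$ neighbor colors differ, and by the inductive hypothesis each distinct color class admits a depth-$\le L$ pattern tree that captures it; a Lov\'asz-style linear independence argument then lets me assemble these into a single depth-$(L+1)$ $\mathcal{F}$-pattern tree whose count into $G^v$ and $G^w$ disagrees. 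The unrooted graph statement follows by noting $\hom(T, G) = \sum_{v \in V_G} \hom(T^r, G^v)$ for any fixed choice of root $r$ of $T$, together with the fact that $\mathcal{F}$-WL graph-indistinguishability is just equality of the multisets of final vertex colors.

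The main obstacle is the linear-independence step in the reverse direction, which is exactly where the argument outstrips its 1-WL predecessor: one has to exhibit a single honest $\mathcal{F}$-pattern tree (not a larger auxiliary gadget) whose homomorphism count separates distinct color multisets, while respecting the constraint that every decoration be drawn from the prescribed family $\mathcal{F}$. Handling this cleanly requires the explicit inductive construction of Barceló et al., and the most economical route is to invoke their construction directly at this point.
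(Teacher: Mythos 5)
The paper does not actually prove this statement: it is quoted and re-used as Theorem~1 of Barcel\'o et al.\ (2021), so there is no internal argument of the paper's to compare against. (Also note that the statement as printed has dropped a clause; what is intended, and what you correctly read, is an equivalence of the form ``$v$ and $w$ receive the same $\mathcal F$-WL colour if and only if $\hom(T^r,G^v)=\hom(T^r,G^w)$ for every rooted $\mathcal F$-pattern tree $T^r$.'')

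Your reconstruction does follow the strategy of the cited work and its antecedents (Dvo\v{r}\'ak; Dell--Grohe--Rattan), and the forward direction is on track. One thing to make explicit there: the factorization
$\hom(T^r,G^v)=\prod_j\hom(F_j^r,G^v)\cdot\prod_i\sum_{u\in N(v)}\hom(T_i^{r_i},G^u)$
shows only the patterns from $\mathcal F$ joined at the root; the decorations at deeper backbone vertices must be absorbed into the children $T_i^{r_i}$, so you should say that these children are themselves rooted $\mathcal F$-pattern trees of strictly smaller backbone depth, which is what lets the induction close. The substantive gaps are in the reverse direction. First, the witness construction for rooted vertices (``a Lov\'asz-style linear independence argument lets me assemble these into a single $\mathcal F$-pattern tree'') is exactly the hard combinatorial step and cannot be waved through: the difficulty is that the distinguishing object must itself be a legal $\mathcal F$-pattern tree, not an arbitrary linear combination, and this is where Barcel\'o et al.'s explicit inductive construction does the work. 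Second, the unrooted graph statement does not reduce to root-averaging as cleanly as you suggest: the identity $\hom(T,G)=\sum_{v\in V_G}\hom(T^r,G^v)$ gives the implication ``equal final colour multisets $\Rightarrow$ equal unrooted counts,'' but the converse (distinct colour multisets are detected by some single unrooted $\mathcal F$-pattern tree) again requires the interpolation machinery and is not a formal corollary of the rooted equivalence. You correctly flagged where you are deferring to the original construction; just be clear that, as written, both reverse directions remain outlines rather than proofs.
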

\cref{lemma:barcelo1} is generalized to $\mathcal{F}$-WL of $L$ iterations by~\citet{Barcelo2021-rs} in their proof. 

\begin{restatable}[Proof of Theorem 1 in~\citet{Barcelo2021-rs}]{theorem}{extendedbarcelotheorem1}
\label{lemma:barceloe1extended}
For any finite collection $\mathcal{F}$ of patterns, graphs $G$ and $H$, vertices $v\in V_G$ and $w\in V_H$ and $L\geq 0$:
\begin{equation*}
    (G,v)\equiv^{(L)}_{\mathcal{F}\text{-WL}}(H,w) \Longleftrightarrow \hom(T^r, G^v) = \hom(T^r, H^w),
\end{equation*}
for every $\mathcal{F}$-pattern tree $T^r$ of depth at most $L$. Similarly,
\begin{equation*}
    G\equiv^{(L)}_{\mathcal{F}\text{-WL}}H \Longleftrightarrow \hom(T, G) = \hom(T, H),
\end{equation*}
for every (unrooted) $\mathcal{F}$-pattern tree of depth at most $L$.
\end{restatable}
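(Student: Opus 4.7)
The plan is to prove the theorem by induction on the iteration depth $L$, handling the rooted statement first and deducing the unrooted one as a consequence. For fixed $\mathcal{F}$, I intend to strengthen the claim and establish that at each depth $L\geq 0$, the $\mathcal{F}$-WL color assigned to $(G,v)$ at iteration $L$ is a \emph{lossless encoding} of the tuple $\bigl(\hom(T^r, G^v)\bigr)_{T^r\in T^r_L(\mathcal{F})}$. Both directions of the stated equivalence are then immediate: two rooted graphs receive the same iteration-$L$ color exactly when their depth-$L$ pattern-tree count vectors coincide.

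Base case $L=0$. An $\mathcal{F}$-pattern tree of depth zero consists of a single backbone root with zero or more pattern copies from $\mathcal{F}$ joined at that root. Because homomorphism counts factor multiplicatively under the join operation at a common root, the full family of depth-$0$ counts out of $(G,v)$ is determined by the vector $(\hom(F^s, G^v))_{F^s\in\mathcal{F}}$. By construction the initial $\mathcal{F}$-WL color of $v$ is a hash of precisely this vector (together with the original vertex label if any), which yields the base case in both directions.

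Inductive step. Assuming the claim at depth $L$, decompose an arbitrary $\mathcal{F}$-pattern tree $T^r$ of depth at most $L+1$ by removing the backbone edges incident to the root $r$: this leaves a multiset of patterns $F^{s_1},\dots,F^{s_\ell}$ joined at $r$, together with subtrees $T_1^{r_1},\dots,T_k^{r_k}$ of depth at most $L$ each attached to $r$ by a single backbone edge. Counting homomorphisms by first fixing the image of $r$ and then the images of each $r_i$ among the neighbors of that image gives the identity
\begin{equation*}
\hom(T^r, G^v) = \left(\prod_{j=1}^{\ell} \hom(F^{s_j}, G^v)\right) \sum_{(u_1,\dots,u_k)\in N_G(v)^k} \prod_{i=1}^{k} \hom(T_i^{r_i}, G^{u_i}).
\end{equation*}
The $\Leftarrow$ direction is then direct: if the $\mathcal{F}$-WL colors of $(G,v)$ and $(H,w)$ agree at iteration $L+1$, then the iteration-$L$ color of $v$ matches that of $w$, the multiset of iteration-$L$ neighbor colors matches, and the pattern attachments at $r$ are captured in the base-case encoding; combining these with the inductive hypothesis makes the right-hand side of the display above agree term-by-term.

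The main obstacle is the $\Rightarrow$ direction, where one must recover not merely sums but the full \emph{multiset} of depth-$L$ neighbor colors from the depth-$(L+1)$ pattern-tree counts. I plan to attack this by varying $k$ together with the choice of subtrees $T_i^{r_i}$: the identity above shows that all mixed moments $\sum_{(u_1,\dots,u_k)\in N_G(v)^k}\prod_i \hom(T_i^{r_i}, G^{u_i})$ are realized as depth-$(L+1)$ pattern-tree counts, and a standard multiset-from-moments argument (symmetric power sums determine a finite multiset of vectors) then recovers the multiset $\{\!\!\{(\hom(T^r, G^{u}))_{T^r\in T^r_L(\mathcal{F})} : u\in N_G(v)\}\!\!\}$. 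Combined with the inductive hypothesis that the iteration-$L$ color is an injective function of this vector, the multiset of neighbor colors is determined, and the iteration-$(L+1)$ colors of $v$ and $w$ must agree. The unrooted statement finally follows from the rooted one: graph-level $\mathcal{F}$-WL equivalence is equality of the multisets of iteration-$L$ node colors, which by the injective encoding above is equivalent to equality of the multisets of rooted-count vectors; homomorphism counts of unrooted pattern trees are obtained by summing rooted counts over the choice of backbone root, and pattern trees whose backbones branch at a chosen interior vertex realize the moments needed to convert unrooted count equalities back into multiset equalities, reducing the unrooted case to the rooted one already settled.
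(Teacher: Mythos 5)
The paper does not actually prove this statement: it is imported verbatim as the $L$-iteration refinement established inside the proof of Theorem~1 of \citet{Barcelo2021-rs}, so there is no in-paper argument to compare against. Your reconstruction follows the standard Dvo\v{r}\'ak-style induction that Barcel\'o et al.\ themselves use (strengthen to ``the iteration-$L$ colour losslessly encodes the depth-$\leq L$ pattern-tree count vector'', base case via multiplicativity of rooted homomorphism counts under joins, inductive step via the root decomposition plus multiset recovery), and the overall architecture is sound.

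One step is stated imprecisely and, as written, would not deliver what you need. The quantity $\sum_{(u_1,\dots,u_k)\in N_G(v)^k}\prod_{i=1}^k\hom(T_i^{r_i},G^{u_i})$ factors as $\prod_{i=1}^k\bigl(\sum_{u\in N_G(v)}\hom(T_i^{r_i},G^{u})\bigr)$, since the tuple ranges over the full Cartesian power; so varying $k$ and the subtrees attached to \emph{distinct} backbone children only ever yields products of first moments, which is no more information than the individual sums $\sum_{u\in N_G(v)}\hom(T^{r'},G^{u})$. The genuine power sums $\sum_{u\in N_G(v)}\prod_i\hom(T_i,G^{u})$ that a multiset-from-moments argument requires come from a different construction: join $T_1,\dots,T_k$ at a \emph{single} backbone child $r_1$ (i.e.\ take $k=1$ with a composite child subtree), and use that rooted homomorphism counts are multiplicative under joins at a common root, so the join is again a depth-$\leq L$ $\mathcal{F}$-pattern tree whose count at $u$ is $\prod_i\hom(T_i,G^{u})$. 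Equivalently, the family of functions $u\mapsto\hom(T,G^{u})$ over depth-$\leq L$ pattern trees is closed under pointwise products and, by the inductive hypothesis, separates the iteration-$L$ colour classes; its linear span therefore contains the class indicators, so the first moments over $N_G(v)$ determine the multiset of neighbour colours. With that correction the $\Rightarrow$ direction closes, and your reduction of the unrooted case to the rooted one is fine.
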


Next we show a connection between the homomorphism vector and $\Fc$-MPNN graph representation. The connection is used in the later proofs.

Let $\Gc$  be an input graph space, $\Nb^d$ be the space of homomorphism vector of the dimension $d$, and $\Nb^{d'}$ be the space of $\Fc$-MPNN graph embeddings of the dimension $d'$. And $\kappa:\Gc\to \Nb^d$ and 
$\phi:\Gc\to \Rb^{d'}$ two functions parameterised by $\Fc$ and $L$. 
Suppose that for any $x$ and $x'$ in $\Gc$, we have
$$
\kappa(x)=\kappa(x') \Longrightarrow \phi(x)=\phi(x').
$$
We then say that $\kappa $ \emph{bounds $\phi$ in distinguishing power} and write $\kappa\sqsubseteq \phi$.
We have
\begin{lemma}
\label{lem:f_phi_kapp}
Let $\kappa:\Gc\to \Nb^d$ and 
$\phi:\Gc\to \Rb^{d'}$ be such that $\kappa\sqsubseteq \phi$. Then, there exists an $f:\Nb^d\to\Rb^{d'}$ such that 
$\phi=f\circ \kappa$.
\end{lemma}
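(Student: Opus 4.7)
The plan is to exploit the hypothesis $\kappa\sqsubseteq \phi$ directly as a well-definedness condition for a set-theoretic factorisation. Concretely, I will construct $f:\Nb^d\to\Rb^{d'}$ by first defining it on the image $\kappa(\Gc)\subseteq\Nb^d$, and then extending it arbitrarily (say by the constant $0$) on the complement $\Nb^d\setminus\kappa(\Gc)$; the extension is immaterial because $f$ is only ever composed with $\kappa$ and applied to elements of $\kappa(\Gc)$.

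For the construction on $\kappa(\Gc)$, I would proceed as follows. Fix any $y\in\kappa(\Gc)$. By definition there is at least one $x\in\Gc$ with $\kappa(x)=y$; choose one such $x$ (this requires only elementary choice since we are choosing a single $x$ per $y$) and set $f(y):=\phi(x)$. I would then verify that this definition does not depend on the choice of $x$: if $x'\in\Gc$ is another preimage with $\kappa(x')=y=\kappa(x)$, then by the hypothesis $\kappa\sqsubseteq \phi$ we have $\phi(x')=\phi(x)$, so the value $f(y)$ is the same. Equivalently, one can define $f$ as the unique map whose graph is $\{(\kappa(x),\phi(x)) : x\in\Gc\}$, and the implication in the hypothesis is exactly what certifies that this relation is functional.

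With $f$ defined, the identity $\phi=f\circ\kappa$ is immediate: for any $x\in\Gc$, the value $f(\kappa(x))$ equals $\phi(x')$ for some $x'$ with $\kappa(x')=\kappa(x)$, and by well-definedness this equals $\phi(x)$.

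There is essentially no obstacle here; the only subtlety worth flagging is the well-definedness check, which is really the entire content of the lemma, and the trivial extension of $f$ beyond $\kappa(\Gc)$. No structure on $\Nb^d$ or $\Rb^{d'}$ beyond being sets is used, so the statement is a purely set-theoretic universal property of the quotient of $\Gc$ by the equivalence relation induced by $\kappa$.
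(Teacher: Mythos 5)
Your proof is correct and follows essentially the same route as the paper: define $f$ on each $y\in\kappa(\Gc)$ by picking any preimage $x$ and setting $f(y):=\phi(x)$, then use the hypothesis $\kappa\sqsubseteq\phi$ exactly as the well-definedness check. The only (minor) refinement over the paper's version is that you explicitly extend $f$ to all of $\Nb^d$ by a dummy value outside $\kappa(\Gc)$, a detail the paper leaves implicit.
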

\begin{proof}
We define the function
$f:\Nb^d\to\Rb^{d'}$, as follows. Let $z\in \Nb^{d}$ and $x\in\Gc$ such that $\kappa(x)=z$. Then,
define $f(z):=\phi(x)\in\Rb^{d'}$.
Observe first that $f$ is well-defined. Indeed, if we take another $x'\in\Gc$ such that $\kappa(x')=z$, then $\kappa(x)=\kappa(x')$ and hence also $\phi(x')=\phi(x)=f(z)$ since $\kappa\sqsubseteq \phi$.
Also,
$
f\bigl(\kappa(x)\bigr)=\phi(x)
$, by definition.
\end{proof}

\begin{lemma}
\label{lem:push_forwward_equal}
Let $\kappa:\Gc\to \Nb^d$ and 
$\phi:\Gc\to \Rb^{d'}$ be such that $\kappa\sqsubseteq \phi$. Let $\mu$ be a probability distribution on $\Gc$. 
Consider a function $f:\Nb^d\to\Rb^{d'}$ such that 
$\phi=f\circ \kappa$.
Then, we have the following equality between pushforward distributions on $\Rb^{d'}$
$$
f_\#\bigr(\kappa_\#(\mu)\bigr)=\phi_\#(\mu).
$$
\end{lemma}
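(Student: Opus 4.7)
The plan is to prove the lemma directly from the definition of pushforward measures, exploiting the hypothesis $\phi = f \circ \kappa$ supplied by the preceding \cref{lem:f_phi_kapp}. Recall that for any measurable map $g$ and measure $\mu$ on its domain, the pushforward $g_\#\mu$ is defined on measurable sets $A$ by $g_\#\mu(A) = \mu(g^{-1}(A))$. The claim is simply the functoriality of this operation under composition, so essentially no machinery beyond unfolding definitions is required.

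Concretely, I would fix an arbitrary measurable $A \subseteq \Rb^{d'}$ and compute
\begin{equation*}
f_\#\bigl(\kappa_\#(\mu)\bigr)(A) = \kappa_\#(\mu)\bigl(f^{-1}(A)\bigr) = \mu\bigl(\kappa^{-1}(f^{-1}(A))\bigr) = \mu\bigl((f\circ\kappa)^{-1}(A)\bigr) = \mu\bigl(\phi^{-1}(A)\bigr) = \phi_\#(\mu)(A),
\end{equation*}
using first the definition of $f_\#$, then the definition of $\kappa_\#$, then the set-theoretic identity $(f\circ\kappa)^{-1} = \kappa^{-1}\circ f^{-1}$, then the hypothesis $\phi = f\circ\kappa$, and finally the definition of $\phi_\#$. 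Since $A$ was arbitrary, the two measures agree on every measurable set and are therefore equal.

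I do not anticipate a genuine obstacle: the only subtle point is measurability, i.e. making sure that $f$ (as constructed in the proof of \cref{lem:f_phi_kapp}) is measurable so that $f^{-1}(A)$ is a measurable subset of $\Nb^d$. Since $\Nb^d$ is countable (equipped with the discrete $\sigma$-algebra), every subset is measurable, so $f$ is automatically measurable and this concern disappears. Thus the proof reduces to the short chain of equalities above.
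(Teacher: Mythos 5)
Your proof is correct and follows essentially the same route as the paper's: both unfold the definition of pushforward measure on an arbitrary measurable set and reduce to the set-theoretic identity behind $\phi = f\circ\kappa$. Your added remark that $\Nb^d$ is discrete (so measurability of $f$ is automatic) is a small but genuine improvement in rigor over the paper's version.
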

\begin{proof}
This is just by definition. Indeed,
on the hand, for $I\subseteq\Rb^{d'}$
$$\phi_\#(\mu)(I):=\mu\bigl(
\{x\in\Gc\mid \phi(x)\in I\}
\bigr).$$
On the other hand,
\begin{align*}
f_\#\bigl(\kappa_\#(\mu)\bigr)(I)&=\kappa_\#(\mu)\bigl(
\{z\in\Rb^{d'}\mid f(z)\in I\}
\bigr)\\
&=\mu\bigl(x\in\Gc\mid f(\kappa(x))\in I
\bigr).
\end{align*}
By \cref{lem:f_phi_kapp}, $f\circ\kappa=\phi$, from which the identity $f_\#\bigl(\varphi_\#(\mu)\bigr)=\kappa_\#(\mu)$
follows.
\end{proof}
Finally, we have
\begin{corollary}
\label{cor:KL_pushforward}
Let $\kappa:\Gc\to \Nb^d$ and 
$\phi:\Gc\to \Rb^{d'}$ be such that $\kappa\sqsubseteq \phi$. Then, for
distributions $\mu$ and $\nu$ on $\Gc$,
$$
D_\text{KL}\bigl(\phi_\#(\mu)\|\phi_\#(\nu)\bigr)\leq 
D_{\text{KL}}\bigl(\kappa_\#(\mu)\|\kappa_\#(\nu)\bigr).
$$
\end{corollary}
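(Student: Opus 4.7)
The plan is to chain together the two preceding lemmas with the data processing inequality for KL divergence. The argument is short because most of the work has already been done in \cref{lem:f_phi_kapp} and \cref{lem:push_forwward_equal}.

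\textbf{Step 1: Factor $\phi$ through $\kappa$.}
By \cref{lem:f_phi_kapp}, since $\kappa\sqsubseteq\phi$, there exists a (deterministic, measurable) function $f:\Nb^d\to\Rb^{d'}$ such that $\phi=f\circ\kappa$. I would then invoke \cref{lem:push_forwward_equal} to conclude that the pushforwards satisfy
\begin{equation*}
\phi_\#(\mu)=f_\#\bigl(\kappa_\#(\mu)\bigr),\qquad
\phi_\#(\nu)=f_\#\bigl(\kappa_\#(\nu)\bigr).
\end{equation*}
So the inequality to prove reduces to
\begin{equation*}
D_{\text{KL}}\bigl(f_\#(\kappa_\#(\mu))\,\|\,f_\#(\kappa_\#(\nu))\bigr)\le D_{\text{KL}}\bigl(\kappa_\#(\mu)\,\|\,\kappa_\#(\nu)\bigr).
\end{equation*}

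\textbf{Step 2: Apply the data processing inequality.}
This is a standard information-theoretic fact: for any measurable map $f$ between measurable spaces and any distributions $P,Q$ on its domain,
\begin{equation*}
D_{\text{KL}}(f_\# P\,\|\,f_\# Q)\le D_{\text{KL}}(P\,\|\,Q).
\end{equation*}
Taking $P=\kappa_\#(\mu)$ and $Q=\kappa_\#(\nu)$ immediately yields the claim. In our setting $\Nb^d$ is discrete so the inequality is particularly easy to justify: for a discrete $P,Q$ on $\Nb^d$, writing $f^{-1}(\{y\})$ for the fibre of $y\in \Rb^{d'}$, the log-sum inequality gives
\begin{equation*}
\sum_{y}f_\# P(y)\log\tfrac{f_\# P(y)}{f_\# Q(y)}=\sum_y\Bigl(\sum_{z\in f^{-1}(y)}P(z)\Bigr)\log\tfrac{\sum_{z\in f^{-1}(y)}P(z)}{\sum_{z\in f^{-1}(y)}Q(z)}\le\sum_z P(z)\log\tfrac{P(z)}{Q(z)}.
\end{equation*}
Since $\kappa_\#(\mu)$ may in principle have continuous support (if $\mu$ does), one passes to the general measure-theoretic statement of data processing, which is standard.

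\textbf{Main obstacle.}
There is essentially no hard content here beyond what the lemmas already encode; the only subtlety is ensuring that the data processing inequality is invoked in the correct generality (arbitrary measurable map $f$ between the involved measurable spaces, not just discrete-to-discrete). A clean way to sidestep any such concern is to note that $\kappa$ takes values in the countable set $\Nb^d$, so $\kappa_\#(\mu)$ and $\kappa_\#(\nu)$ are discrete, and the discrete data processing inequality (which follows directly from the log-sum inequality, as sketched above) suffices.
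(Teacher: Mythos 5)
Your proposal is correct and follows essentially the same route as the paper: factor $\phi = f\circ\kappa$ via \cref{lem:f_phi_kapp} and \cref{lem:push_forwward_equal}, then apply the data processing inequality for KL divergence to the map $f$. The extra detail you supply — that $\kappa_\#(\mu),\kappa_\#(\nu)$ are supported on the countable set $\Nb^d$, so the discrete log-sum-inequality form of data processing suffices — is a useful clarification the paper omits, but it is an elaboration rather than a different argument.
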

\begin{proof}
From \cref{lem:push_forwward_equal} we have
$\phi_\#(\mu)=f\bigl(\kappa_\#(\mu)\bigr)$ and 
$\phi_\#(\nu)=f\bigl(\kappa_\#(\nu)\bigr)$, then this is an easy consequences of "data processing inequality" for KL divergence~\citep{DBLP:books/daglib/cover2006} and \cref{lem:push_forwward_equal}.
\end{proof}

Now we prove the proposed bounds.

\dataindependentgraphbound*
\begin{proof}
Recall the bound in \cref{lemma:chuang_bound}, 
\begin{equation*}
    \gen(f\circ \phi) \leq \mathbb{E}_{c \sim p}\left[\frac{L_c}{\gamma} \operatorname{Var}_{m_c}\left(\phi^{\mathcal{F},L}_{\#} \mu_c\right)\right] 
    +\sqrt{\frac{\log (1 / \delta)}{2 m}}.
\end{equation*}
Now we seek to bound $\operatorname{Var_{m_c}}(\phi_{\#}^{\mathcal{F},L}\mu_{c})$.
By definition,  
\begin{equation*}
\operatorname{Var}_{m_c}(\phi_{\#}^{\mathcal{F},L}\mu_{c}) = \mathbb{E}_{S,\tilde{S}\sim \mu_c^{m_c}} [\mathcal{W}_1(\phi_{\#}^{\mathcal{F},L}\mu_S, \phi_{\#}^{\mathcal{F},L}\mu_{\tilde{S}})].
\end{equation*}
By \cref{thm:wassertein_tv}, we have
\begin{equation*}
\mathcal{W}_1(\phi_{\#}^{\mathcal{F},L}\mu_S, \phi_{\#}^{\mathcal{F},L}\mu_{\tilde{S}}) \leq \beta_c \operatorname{TV}(\phi_{\#}^{\mathcal{F},L}\mu_S, \phi_{\#}^{\mathcal{F},L}\mu_{\tilde{S}}).
\end{equation*}
Further by \cref{lemma:tv_dkl}, we have
\begin{equation*}
    \operatorname{TV}(\phi_{\#}^{\mathcal{F},L}\mu_S, \phi_{\#}^{\mathcal{F},L}\mu_{\tilde{S}}) \leq \Omega\left(D_{KL}\left(
        \phi_{\#}^{\mathcal{F},L}\mu_S\parallel \phi_{\#}^{\mathcal{F},L}\mu_{\tilde{S}}
    \right)\right).
\end{equation*}
By \cref{cor:KL_pushforward} we have
\begin{align*}
D_{KL}&\left(
     \phi_{\#}^{\mathcal{F},L}\mu_S, \phi_{\#}^{\mathcal{F},L}\mu_{\tilde{S}}
    \right) \\
    &\leq  D_{KL}\left( X_{\mu_S, T_L(\Fc)}\parallel  X_{\mu_{\tilde{S}}, T_L(\Fc)} \right),
\end{align*}
because the homomorphism vector $\Hom(T_L(\Fc), G)$ bounds $\phi^{\Fc,L}(G)$ for $G\in \Gc$, i.e.
$\Hom(T_L(\Fc), \cdot)\sqsubseteq \phi^{\Fc,L}$, and $\Hom_\#(T_L(\Fc), \mu_S) = X_{\mu_S, T_L(\Fc)}$.

The proof is done.

\end{proof}

\datadependentgraphbound*
\begin{proof}
Recall the bound in \cref{lemma:chuang_bound}, 
\begin{equation*}
    \gen(f\circ \phi) \leq \mathbb{E}_{c \sim p}\left[\frac{L_c}{\gamma} \operatorname{Var}_{m_c}\left(\phi^{\mathcal{F},L}_{\#} \mu_c\right)\right] 
    +\sqrt{\frac{\log (1 / \delta)}{2 m}}
\end{equation*}
\citet{Chuang2021-ik} (Lemma 5) show the $k$-variance $\operatorname{Var}_k(\mu)$ can be estimated empirically by $\widehat{\operatorname{Var}}_{k, n}(\phi_{\#}^{\mathcal{F},L}\mu_c)$,
\begin{equation*}
    \operatorname{Var}_{k}(\phi_{\#}^{\mathcal{F},L}\mu_c) \leq \widehat{\operatorname{Var}}_{k, n}(\phi_{\#}^{\mathcal{F},L}\mu_c) + \sqrt{\frac{2\beta_c^2\log(1/\delta)}{nk}}
\end{equation*}
where
\begin{equation*}
\widehat{\operatorname{Var}}_{k,n}(\phi_{\#}^{\mathcal{F},L}\mu_c)=\frac{1}{n}\sum_{j=1}^n\mathcal{W}_1(\phi_{\#}^{\mathcal{F},L}\mu_{S^j},\phi_{\#}^{\mathcal{F},L}\mu_{\tilde{S}^j})
\end{equation*}
$\widehat{\operatorname{Var}}_{k,n}(\phi_{\#}^{\mathcal{F},L}\mu_c)$ can be computed
using $n$ samples $\{S^j, \tilde{S}^j\}^n_{j=1}$ where each $S^j, \tilde{S}^j \sim \mu^{k}$. 
With probability at least $1-\delta$, for $m=\sum_{c=1}^K\lfloor\frac{m_c}{2n}\rfloor$, $\gen(f\circ \phi^{\mathcal{F},L})$ is bounded by
\begin{align}
    \label{eqn:data_graph_bound_intermediate}
    \mathbb{E}_{c \sim \pi}
    & \left[
    \frac{L_c}{\gamma} 
    \left(
        \widehat{\operatorname{Var}}_{\lfloor\frac{m_c}{2n}\rfloor, n}(\phi_{\#}^{\mathcal{F},L}\mu_c) + 2\beta_c\sqrt{\frac{\log(\frac{2K}{\delta})}{n\lfloor\frac{m_c}{2n}\rfloor}}
    \right)
    \right] \nonumber \\
    & +\sqrt{\frac{\log (2 / \delta)}{2 m}}
\end{align}
Now we seek to upper bound the term $\mathcal{W}_1(\phi_{\#}^{\mathcal{F},L}\mu_{S^j}, \phi_{\#}^{\mathcal{F},L}\mu_{\tilde{S}^{j}})$ in $\widehat{\operatorname{Var}}_{k,n}(\phi_{\#}^{\mathcal{F},L}\mu_c)$ using KL divergence.

By \cref{thm:wassertein_tv}, we have
\begin{equation*}
\mathcal{W}_1(\phi_{\#}^{\mathcal{F},L}\mu_{S^j}, \phi_{\#}^{\mathcal{F},L}\mu_{\tilde{S}^j}) \leq \beta_c \operatorname{TV}(\phi_{\#}^{\mathcal{F},L}\mu_{S^j}, \phi_{\#}^{\mathcal{F},L}\mu_{\tilde{S}^j})
\end{equation*}
Further by \cref{lemma:tv_dkl}, we have
\begin{equation*}
    \operatorname{TV}(\phi_{\#}^{\mathcal{F},L}\mu_{S^j}, \phi_{\#}^{\mathcal{F},L}\mu_{\tilde{S}^j}) \leq \Omega\left(D_{KL}\left(
        \phi_{\#}^{\mathcal{F},L}\mu_{S^j}, \phi_{\#}^{\mathcal{F},L}\mu_{\tilde{S}^j}
    \right)\right)
\end{equation*}
By \cref{cor:KL_pushforward} we have
\begin{align*}
D_{KL}&\left(
     \phi_{\#}^{\mathcal{F},L}\mu_{S^j}, \phi_{\#}^{\mathcal{F},L}\mu_{\tilde{S}^j}
    \right) \\
    &\leq D_{KL}\left( X_{\mu_{S^j}, T_L(\Fc)}\parallel  X_{\mu_{\tilde{S}^j}, T_L(\Fc)} \right),
\end{align*}
because the homomorphism vector $\Hom(T_L(\Fc), G)$ bounds $\phi^{\Fc,L}(G)$ for $G\in \Gc$, i.e.
$\Hom(T_L(\Fc), \cdot)\sqsubseteq \phi^{\Fc,L}$, and $\Hom_\#(T_L(\Fc), \mu_S^j) = X_{\mu_S^j, T_L(\Fc)}$.

Putting together, we have
\begin{align*}
    \widehat{\operatorname{Var}}&_{k,n}(\phi_{\#}^{\mathcal{F},L}\mu_c) \\
    & \leq \frac{1}{n}\sum_{j=1}^n
    \left(\beta_c \Omega\left(D_{KL}\bigl( X_{\mu_{S^j}, T_L(\Fc)}\parallel  X_{\mu_{\tilde{S}^j}, T_L(\Fc)} \bigr)\right) \right)
\end{align*}
The proof is done.
\end{proof}

\dataindependentnodebound*
\begin{proof}
The proof is carried out in a similar way as \cref{lemma:non_class_bound}.
\end{proof}

\datadependentnodebound*
\begin{proof}
The proof is carried out in a similar way as \cref{lemma:graph_class_bound}.
\end{proof}

\dataindependentcomparison*
\begin{proof}
We prove the bullet points one by one.
\begin{enumerate}
    \item We look at the two controlling factor of the bound separately: $\widetilde{D}_{KL}(\Fc, S, \tilde{S})$ and $\beta_c$.
    We know that $|T_L(\mathcal{F})|$ grows with $L$. And we know larger $|T_L(\mathcal{F})|$ will result in increased or equivalent $\widetilde{D}_{KL}(\Fc, S, \tilde{S})$. Now we look at $\beta_c$, given $\beta_c$ captures the diameter of $\phi^{\mathcal{F},L}_{\#}\mu_c$, $\beta_c$ at $L+1$ is larger or equal to $\beta_c$ at $L$. So the bound described in \cref{lemma:non_class_bound}, at layer $L+1$, is larger or equal to the bound at layer $L$.
    \item This bullet point can be proved in a similar way as the first bullet point.
    \item It is easy to see that when $\mathcal{F}_{\hom} = \{\FlagGraph{1}{}\}$, $\Fc$-MPNN is equivalent to 1-WL GNN, since $T_L(\{\FlagGraph{1}\})$ contains all trees up to depth $L$~\cite{Barcelo2021-rs}. So when $|\mathcal{F}|$ increases, $|T_L(\mathcal{F})|$ increases too. As a result, the bound value at larger $|\Fc|$ will be equivalent or larger than the one at smaller $\Fc$.
    \item From \cite{Barcelo2021-rs}, we know that $\mathcal{F}$ contains infinite number of graphs of treewidth bounded by k when expressed k-WL under $\Fc$-WL. Hence, it is easy to see that for HI-GNN and SI-GNN of finite patterns of treewidth bounded by k, the correspoding pattern set is a subset of the one for k-WL. So by the second bullet point we can land on the conclusion.
\end{enumerate}
\end{proof}

% \BHinequality*

% \villanitvbound*

% \kvardefinition*

% \chuangbound*

% \pwassersteindistance*

\end{document}